\def \endprf{\hfill {\vrule height6pt width6pt depth0pt}\medskip}
\newenvironment{proof}{\noindent {\bf Proof} }{\endprf\par}
\newcommand{\BE}{\mathbb{E}} 
\newcommand{\R}{\mathbb{R}} 
\newcommand{\CS}{\mathcal{S}} 
\newcommand{\CA}{\mathcal{A}} %
\newcommand{\CT}{\mathcal{T}} %
\newcommand{\CE}{\mathcal{E}} %
\newcommand{\Tr}{{\rm Trace}} %
\newcommand{\argmax}{\arg\max} %
\newcommand{\Span}{\rm Span} 
\newcommand{\diag}{{\rm diag}} 
\newcommand{\dist}{{\rm dist}} 
\newtheorem{theorem}{Theorem}
\newtheorem{assumption}{Assumption}
\newtheorem{lemma}{Lemma}
\newtheorem{corollary}{Corollary}
\theoremstyle{definition}
\newtheorem{remark}{Remark}
\newcommand*{\QE}{\hfill\ensuremath{\blacksquare}}%
\newenvironment{pfof}[1]{\vspace{1ex}\noindent{\textbf{Proof of
		#1:}}\hspace{0.5em}} {\hfill\QE\vspace{1ex}}
\title{Non-Stationary Representation Learning in Sequential Linear Bandits}
\author{%
  Yuzhen Qin\thanks{Department of Mechanical Engineering,
  	University of California, Riverside, CA 92521, USA} \\
  \texttt{yuzhenq@ucr.edu}
   \And
   Tommaso Menara\thanks{Department of Mechanical and Aerospace Engineering, University of California, San Diego, La Jolla, CA 92093, USA}\\ 
   \texttt{tmenara@eng.ucsd.edu}
   \And
   Samet Oymak\thanks{Department of Electrical and Computer Engineering,
   University of California, Riverside, CA 92521, USA}\\
   \texttt{oymak@ece.ucr.edu}
   \And
   ShiNung Ching\thanks{Department of Electrical and Systems Engineering and Biomedical Engineering, Washington University in St. Louis, St. Louis, MO 63130, USA}\\
   \texttt{shinung@wustl.edu}
   \And
   Fabio Pasqualetti$^*$\\
   \texttt{fabiopas@engr.ucr.edu}
}
\begin{document}

\maketitle

\begin{abstract}
  In this paper, we study representation learning for multi-task decision-making in non-stationary environments. We consider the framework of sequential linear bandits, where the agent performs a series of tasks drawn from different environments. The embeddings of tasks in each set share a low-dimensional feature extractor called \textit{representation}, and representations are different across sets. We propose an online algorithm that facilitates efficient decision-making by learning and transferring non-stationary representations in an adaptive fashion. We prove that our algorithm significantly outperforms the existing ones that treat tasks independently. We also conduct experiments using both synthetic and real data to validate our theoretical insights and demonstrate the efficacy of our algorithm. 
\end{abstract}

\section{INTRODUCTION}
Humans are naturally endowed with the ability to learn and transfer experience to later unseen tasks. The key mechanism enabling such versatility is the abstraction of past experience into a `basis set' of simpler representations that can be used to construct new strategies much more efficiently in future complex environments \cite{FNT-FMJ,RA-SYS-NY:2021}.

Inspired by this observation, recent years have witnessed an increasing interest in the study of \textit{representation learning} \cite{BY-CA-VP:2013}. Representation learning is an important tool to perform transfer learning, wherein common low-dimensional features shared by tasks are inferred and generalized. It underlies major advances in a variety of fields including language processing \cite{LJD-LQ-SN-ZJ:2020}, drug discovery \cite{RB-KS-RP-WD-KD-PV:2015}, and reinforcement learning \cite{DC-TD-BA-RM-PJ:2019}. 
Due to its promising seminal impact, there are many recent theoretical studies on representation learning
(e.g., see \cite{BMF-KM-TA:2019,DS-HW-KSM-LJD-LQ:2020,TN-JC-JMI:2021,TN_JM_JC:2020,BQ-IA-RL-AZ-ZY-HA:2020}).  Yet, existing literature focuses on representation learning for batch tasks and is restricted to static representations, thus relying on the working assumption that \emph{one representation fits all tasks}.

Most realistic decision-making scenarios feature two challenges: (i) the learning agent faces tasks that appear in sequence, and (ii) the agent may encounter distinct environments sequentially (see Fig.~\ref{conceptual}~(a)), where learning a single representation is no longer sufficient. Humans can perform extraordinarily well in such scenarios because of their flexibility to adapt to new environments. For instance, in the Wisconsin Card Sorting Task (WCST, see Fig.~\ref{conceptual}~(b)), participants are asked to match a sequence of stimulus cards to one of the four cards on the table according to some sorting rule --- number, shape, or color. The sorting rule changes every now and then without informing the participants. Earlier studies  have shown that, in general, humans perform very well on this task (e.g., \cite{BA-DW-JP-RJK:2009}). By contrast, some classical learning algorithms, such as the tabular-Q learning and the deep-Q learning, struggle in WCST (as we show in Section~\ref{sec:Experiments}). Unlike humans, these algorithms can neither abstract succinct information from experience nor adapt to new environments. This observation reveals the need to develop more human-like reasoning and a more fluid approach in representation learning. 

This paper takes an important step towards a deeper theoretical understanding of representation learning in non-stationary environments. As a prototypical sequential decision-making scenario, we consider a series of linear bandit models, where each bandit represents a different task, and the objective is to maximize the cumulative reward by interacting with these tasks. Moreover, sequential tasks are drawn from different environments. Importantly, tasks in the same environment share a low-dimensional linear representation, and different environments have their own representations. Our modeling choice can be used in a wide range of applications. For instance, consider an adaptive  system that recommends music to  users of a streaming platform. This system naturally fits our model: non-stationary environments arise due to the fact that distinct groups of active users can have different preferences at different times of the day (see Fig.~\ref{conceptual}~(c)). The goal of this paper is to analytically study representation learning in dynamical environments akin to the above recommendation system.

\begin{figure}[t]
	\centering
	\includegraphics[scale=1.8]{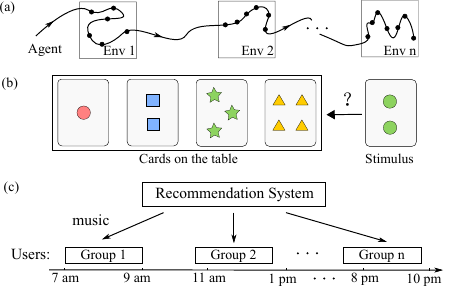}
	\caption{Scenarios with non-stationary environments. (a) Changing environments that the agent enters to perform tasks in sequence. (b) The Wisconsin Card Sorting Task, a classical test to assess cognitive capabilities that features a changing sorting rule; (c) A recommendation system that recommends music to distinct users at different times of the day  based on the users' tastes.}
	\label{conceptual}
\end{figure}

\textbf{Related Work.\;} As a well-known model to capture the exploration-exploitation dilemma in decision-making, multi-armed bandits have attracted extensive attention (see \cite{BS-CN:2012} for a survey). A variety of generalized bandit problems have been investigated, where the situations with non-stationary reward functions \cite{WL-SV:2021}, restless arms \cite{GT-CK:2020}, satisficing reward objectives \cite{RP-SV-LNE:2016}, heavy-tailed reward distributions \cite{WL-SV:2020}, risk-averse decision-makers \cite{MM-CO:2021}, and multiple players \cite{HMK-DS:2021} are considered.  Distributed algorithms have also been proposed to tackle bandit problems (e.g., see \cite{DK-NN-RJ:2014,LP-SV-LNE:2018,MRD-KV-RP:2019,LP-SV-LNE:2021,UM-NEL:2020,Zhu-Liu:2021,MU-LN:2021,MU-DA-LN-PA:2021}).Recently, some studies take into account the nature that sequentially collected data is adaptive and propose novel algorithms to further improve the performance \cite{DM-RZ-ZZ:21,ZR-RZ-AS-ZZ:21}.

Besides the above work that focuses on single-task bandits, some efforts have also been made to study multi-task problems. The core of multi-task bandits is to learn and transfer interrelationships across multiple tasks, aiming to improve decision-making efficiency compared to treating tasks independently. Various types of interrelationships can be leveraged to  boost the learning agent's performance, including the mean of tasks drawn from a stationary distribution \cite{AMG-LA-BE:2013,CL-LA-PM:2020}, similarity of task coefficients in linear bandits \cite{SM-AO-LA-PJ:2014},  and resemblance in contexts of arms in contextual bandits \cite{AAD-UD-CS:2017}. 
Recently, learning low-dimensional subspaces shared by task coefficients has also been proven to improve performance in simultaneous linear bandits \cite{Yang-Hu-Lee:2021,HJ-CX-JC-LL-WL:2021,CL-LK-PM:22}. 

\textbf{Contribution.\;} 
This paper seeks to develop methods to learn and transfer non-stationary representations across sequential bandits. In contrast to recent studies that play bandits simultaneously \cite{Yang-Hu-Lee:2021,HJ-CX-JC-LL-WL:2021}, the sequential setting is more realistic, and representation learning in this context is much more challenging. First, there does not exist a low-dimensional representation that fits all bandits. The agent needs to adapt to dynamical environments. Second, the agent does not know when environment changes happen, thus has no knowledge of the number of tasks drawn from an environment. It is therefore challenging to strike the balance between learning and transferring the representation. 

We propose an adaptive algorithm to overcome these challenges. Within each environment, this algorithm alternates between representation exploration and exploitation, making it flexible to different durations of the environments. Meanwhile, we incorporate a change-detection strategy into our algorithm to adapt to non-stationary environments. We further obtain an upper bound for our algorithm $\tilde{O} (dr \sqrt{m S N} + S r \sqrt{N})$, with $d$ being the task dimension, $r$ the representation dimension, $m$ the number of environments, $S$ the number of tasks, and $N$ the number of rounds for each task. Our regret significantly outperforms the baseline $\Theta(Sd\sqrt{N})$ of algorithms treating tasks independently. To demonstrate our theoretical results, we perform some experiments using synthetic data and LastFM data. Simulation results also show that our algorithm considerably outperforms classical reinforcement learning algorithms in WCST.

Our preliminary work \cite{YQ-TM-SO-SC-FP:22a} presents limited theoretical findings on representation learning in the sequential setting, but we go well beyond that in this paper by considering non-stationary environments and providing a comprehensive account of the results. Further, we demonstrate the broad applications of our results by presenting more experiments.

\textbf{Organization.\:} The problem setup is in Section~\ref{sec:setup}. In Section~\ref{sec:within}, we present an algorithm that performs representation learning in a single environment. An environment-change-detection algorithm is provided in Section~\ref{sec:detection}. In Section~\ref{sec:main_sec}, the main algorithm is presented by putting together Sections~\ref{sec:within} and \ref{sec:detection}. Illustrative experiments are reported in Section~\ref{sec:Experiments}. Concluding remarks appear in Section~\ref{sec:conclusion}.

\textbf{Notation.\;} Given a matrix $A\in \R^{d\times r}$, $r<d$, $\Span(A)$ denotes its column space, $A_\perp\in\R^{d\times (d-r)}$ the orthonormal basis of the complement of $\Span(A)$, $[A]_i$ its $i$th column, $\sigma_{i}(A)$ its $i$th largest singular value, and $\|A\|_F$ its Frobenius norm. We use $\|x\|$ to denote the $L_2$ norm if $x$ is a vector and the spectral norm if $x$ is a matrix.  Let $A$ and $B$ be two orthonormal basis of two subspaces $\CA,\mathcal B \subset\R^{d\times r}$. Define $\sin \bm{\theta}_1(A,B):=\sin( \bm{\theta}_1)$ and $\sin \bm{\theta}_r(A,B):=\sin( \bm{\theta}_r)$, where $\bm{\theta}_1$ and $\bm{\theta}_r$ are computed by the singular value decomposition $A^\top B=UDV^\top$ with $D=\diag(\cos \bm{\theta}_1,\dots,\cos  \bm{\theta}_r)$ satisfying $0\le  \bm{\theta}_r\le \dots \le  \bm{\theta}_1\le \frac{\pi}{2}$. Following \cite{DC-KWM:70}, the distance between $\CA$ and $\mathcal B$ is defined as $\dist(\CA,\mathcal B)=\|\diag(\sin \bm{\theta}_1,\dots,\sin  \bm{\theta}_r)\|_F=\|A^\top B_\perp\|_F$.
Given a positive number $x$, $\lceil x\rceil$ denotes the smallest integer that is greater than or equal to $x$. Given two functions $f,g:\R^+\to \R^+$, we write $f(x)=O(g(x))$ if there is $M_o>0$ and $x_0>0$ such that $f(x)\le M_og(x)$ for all $x\ge x_0$, and $f(x)=\tilde{O}(g(x))$ if $f(x)=O(g(x)\log^k (x))$. Also, we denote $f(x)=\Omega(g(x))$ if there is $M_\Omega>0$ and $x_0>0$ such that $f(x)\ge M_\Omega g(x)$ for all $x\ge  x_0$, and $f(x)=\Theta(g(x))$ if $f(x)=O(g(x))$ and $f(x)=\Omega(g(x))$.

\section{Problem Setup}\label{sec:setup}

In this paper, we consider the following  multi-task sequential linear bandits model:
\begin{align}\label{main:model}
	y_t= x_t^\top \theta_{q(t)} +\eta_t,
\end{align}
where $x_t\in \CA\subseteq \R^d$ is the action taken by the agent at round $t$, $\theta {\in \R^d}$ is the bandit coefficient, and $\eta_t$ is the additive noise that is assumed to be zero mean $1$-sub-Gaussian, i.e., $\mathbb{E} [e^{\lambda \eta_t}] \le \exp({\frac{\lambda^2}{2}})$ for any $\lambda>0$. 

Notice that the coefficient vector $\theta_{q(t)}$ is time-varying. 
We assume that $q(t)=\lceil \frac{t}{N}\rceil$, where $t=1,2,\dots,SN$. That is, the agent  plays $S$ bandits in sequence and interacts with each bandit for $N$ rounds\footnote{We make this assumption for simplicity. Our results can be readily generalized to the case where bandits are played for different rounds.}. Then, the task sequence can be denoted as  $\CS:=\{\theta_1,\theta_2,\dots,\theta_{S}\}$.	
	
	We further assume that these tasks are drawn from $m$ different environments, $\CE_1,\CE_2,\dots,\CE_m$ (each $\CE_{k}$ is a set of tasks). Specifically, the task sequence $\CS$ can be divided into $m$ consecutive subsequences (see Fig.~\ref{Pro:Setup}), i.e.,
	\begin{align*}
		\CS=\{\CS_1,\CS_2,\dots,\CS_m\},
	\end{align*}
	such that $\CS_k\subseteq \CE_k$ for $k=1,2,\dots,m$. Denote $\tau_k$ as the number of tasks in $\CS_k$; it satisfies $\sum_{i=k}^{m}\tau_k=S$. We assume that, for each environment $\CE_k$,  there exists a matrix $B_k\in \R^{d\times r_k}$ with orthonormal columns such that
\begin{align*}
	\exists \alpha_i\in \R^{r_k}:\theta_i=B_k \alpha_i
\end{align*}
for  any $\theta_i \in \CE_k$. This assumption is motivated by the fact that real-world tasks often share low-dimensional structures, called \textit{representations} \cite{BY-CA-VP:2013}. As for the examples in Fig.~\ref{conceptual}, the representation can describe the common preferences of a user group on a music streaming platform or a certain sorting rule of the Wisconsin sorting task. With a slight abuse of terminology, we refer to $B_k$ as the representation of the $k$th  environment, $k=1,\dots,m$. For simplicity, we assume that the representations have the same dimension\footnote{Our results can be applied to the situation with heterogeneous $r_k$ by simply letting $r=\max_k r_k$.}, i.e., $r_k=r$ for all $k=1,\dots,m$.

The goal is to maximize the cumulative reward $\sum_{t=1}^{SN} y_t$ by interacting with the sequential bandits in the non-stationary environments. The agent knows $\sigma(t),N,d$, and $r$, but has no knowledge of $\theta_i$, $B_k$, and $\tau_k$ for any $i=1,\dots,S$ and $k=1,\dots,m$.
To measure the agent's performance in the $T:=SN$ rounds, we introduce the (pseudo-)\textit{regret}
\begin{align}\label{regret}
	R_{T}= \sum_{t=1}^{T} (x_{t}^*-x_{t})^\top\theta_{{q(t)}},
\end{align}
where $x^*_{t} =\argmax_{x \in \mathcal A} x^\top \theta_{{q(t)}}$ is the optimal action that maximizes the reward at round $t$. Maximizing the cumulative reward is then equivalent to minimizing the regret $R_T$. 

Following existing studies (e.g., see \cite{RP-TJN:2010,LY-WY-CX-ZY:2021}), we assume that the following  assumptions on the action set $\mathcal A$ and the task coefficients $\theta_i$  hold throughout this paper. 
\begin{assumption}[Linear bandits]\label{Assump_1}	We assume that: (a) the action set $\CA$ is the ellipsoid of the form $\{x\in \R^d:x^\top M^{-1}x\le 1\}$, where $M$ is a symmetric positive definite matrix, and (b) there are positive constants $\theta_{\min}$ and $\theta_{\max}$ so that $\theta_{\min}\le \|\theta_s\| \le \theta_{\max}$ for all  $s \in \{1,2,\dots,S\}$.
\end{assumption}

\begin{figure}[t]
	\centering
	\includegraphics[scale=0.7]{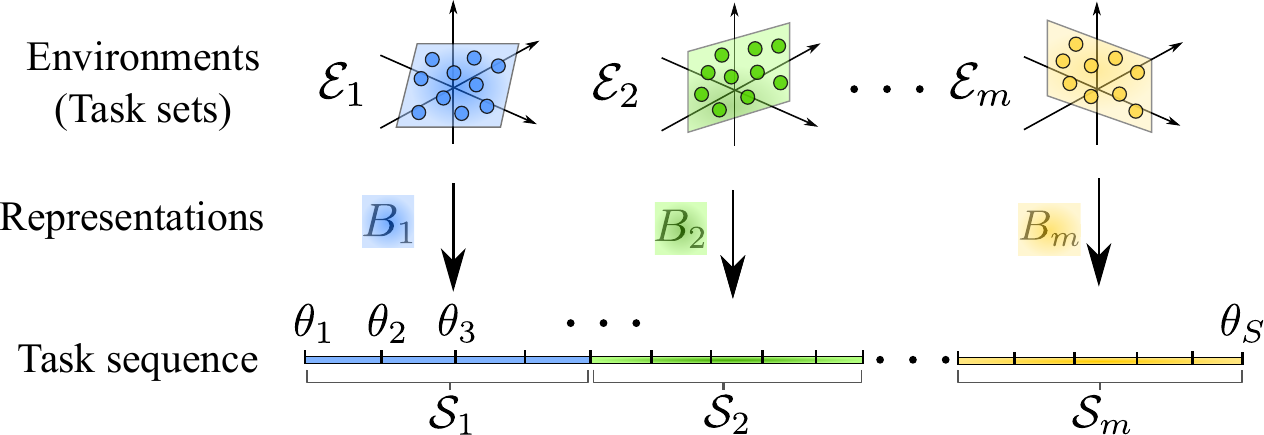}
	\caption{Sequential tasks in non-stationary environments. The tasks are taken from different environments $\CE_1,\dots,\CE_m$, forming subsequences $\CS_1,\dots,\CS_m$. The tasks in each environment share a low-dimensional representation (descried by $B_k,k=1,\dots,m$). In each subsequence $\CS_k$, there are $\tau_k$ tasks, but $\tau_k$'s are not known \textit{a priori}.}
	\label{Pro:Setup}
\end{figure}

\textbf{Limitation of Independent Strategies.} Under the stated assumptions, previous work (e.g., see \cite{Dani-Hayes-Kakade:2008,RP-TJN:2010,LY-WY-CX-ZY:2021}) shows that the regret of single-task bandits is lower bounded by $\Omega(d\sqrt{d})$. Intuitively, if the sequential bandits $\CS=\{\theta_1,\theta_2,\dots,\theta_S\}$ are independently played by those standards algorithms, the best performance is optimally $\Theta(Sd\sqrt{N})$.

\textbf{Potential Benefits of Representation Learning.} In the case of an oracle where the representations $B_1,\dots,B_m$ are known, it holds that $\theta_i=B_k\alpha_i$ for any $\theta_i\in\CS$. Letting $z_t=B_k^\top x_t$, the $d$-dimensional bandit $y_t=x_t^\top \theta_i+\eta_t$ becomes a $r$-dimensional one $y_t=z_t^\top \alpha_i+\eta_t$ with $\alpha_i\in\R^r$. Following \cite{Dani-Hayes-Kakade:2008,RP-TJN:2010,LY-WY-CX-ZY:2021}, one can show that the best performance of an algorithm can be optimally $\Theta(Sr\sqrt{N})$, which indicates a significant performance improvement compared to the standard algorithms if $r\ll d$. The reason is that learning to make decisions is accomplished in much lower-dimensional subspaces.  
The above observation implies potential benefits of representation learning, that is, exploring and exploiting the underlying low-dimensional structure between bandit tasks can facilitate more efficient decision-making. 

In our setting, representation learning has two main challenges. First, how can the agent explore and exploit representation in the sequential setting? Particularly, striking the balance between exploration and exploitation becomes more challenging than the situation where bandits are played concurrently \cite{Yang-Hu-Lee:2021,HJ-CX-JC-LL-WL:2021}. There is a trade-off between the need to explore more sequential tasks (more data samples) to obtain a more accurate representation estimate and the incentive to exploit the learned representation for more efficient learning and higher immediate rewards.   Second, how can the agent deal with  environment changes? The remainder of this paper aims to address these challenges.

\section{Representation Learning in Sequential Bandits: Within-environment  Policy} \label{sec:within}
In this section, we show how the agent can improve its performance by using representation learning in the sequential setting. The main result is the sequential representation learning algorithm (SeqRepL, see Algorithm~\ref{alg:SeqRepL}), a within-environment policy that deals with individual segments of tasks drawn from the same environment. 
\subsection{Sequential Representation Learning Algorithm}

The key feature of SeqRepL is to balance representation exploration and exploitation without knowing the duration of each environment. 
To show how SeqRepL works, we first restrict our attention to a series of bandit tasks in this section:
\begin{align}\label{Sequ_tasks}
	\CT= \{ \theta_1, \dots, \theta_\tau\},
\end{align}
where the number of tasks $\tau$ is unknown, and  the representation shared by the tasks is $B \in \R^{d \times r}$.

\begin{algorithm}[tb]
	\caption{Representation Exploration (RepE)}
	\label{alg:RE}
	\begin{algorithmic}[1]
		\footnotesize
		\State \textbf{Input:}  $N$, $N_1$, \{$a_1,\dots,a_d$\}.
		\For{$t=1: N_1$}
		\State {Take the action $x_t= a_i$, $i=({t-1\mod d})+1$}
		\EndFor
		\State Compute $\hat \theta=(X_{\rm E} X_{\rm E}^\top)^{-1}X_{\rm E} Y_{\rm E}$
		\For {$t=N_1+1: N$} 
		\State Take the action $x_t= \arg\max_{x \in \mathcal A} x^\top \hat \theta$ 
		\EndFor
	\end{algorithmic}
\end{algorithm}
\begin{algorithm}[tb]\caption{Representation Transfer (RepT($\hat B$))}\label{alg:RT}
	\begin{algorithmic}[1]
		\footnotesize
		\State \textbf{Input:} $N$, $N_2$, $\hat B \in \R^{d\times r}$,  $\{a_1',\dots,a_r'\}$
		\For {$t=1: N_2$}
		\State Take the action $x_i= a'_i$, $i=({t-1\mod  r})+1$
		\EndFor
		\State Compute 
		$\hat \alpha=(\hat B^\top X_{\rm T}X_{\rm T}^\top \hat B)^{-1}\hat B^\top X_{\rm T}Y_{\rm T}$ and $\hat \theta= \hat B \hat \alpha$
		\For {$t=N_2+1: N$}
		\State Take the action $x_t= \arg\max_{x \in \mathcal A} x^\top \hat \theta$
		\EndFor 
	\end{algorithmic}
\end{algorithm}

\begin{algorithm}[t]\caption{SeqRepL}\label{alg:SeqRepL}
	\begin{algorithmic}[1]
		\footnotesize 
		\State \textbf{Input:} $L$, $N_1$, $N_2$, $n$. \hspace{1.5cm}\textbf{Initialize}: $\hat P=0_{d\times d}$.
		\State \textbf{for} cycle $n=1,2,\dots$ \textbf{do}  
		\State \hspace{5pt} \textit{Rep Exploration:} play $L$ tasks in $\CT$ using RepE,\\ \hspace{2.5cm} $\hat P=\hat P+\hat\theta_i \hat \theta_i^\top$ \hfill{\scriptsize $\triangleright$ fixed duration: $L$}
		\State \hspace{5pt} $\hat B \leftarrow$ the left-singular vectors associated with the largest $r$ singular values of $\hat W=\frac{1}{nL}\hat P$ 
		\State 	\hspace{5pt} \textit{Rep Transfer:} play $nL$ tasks in $\mathcal S_{\tau}$ using RepT($\hat B$)  \hfill {\scriptsize $\triangleright$ increasing duration: $nL$}\\
		\textbf{end for}
	\end{algorithmic}
\end{algorithm}

SeqRepL operates in a cyclic manner, which is inspired by the PEGE algorithm \cite{RP-TJN:2010}. It alternates between two sub-algorithms---\textit{representation exploration} (RepE, see Algorithm~\ref{alg:RE}) and \textit{representation transfer} (RepT, see Algorithm~\ref{alg:RT}). Both RepE and RepT are explore-then-commit (ETC) algorithms, consisting of two stages, i.e., {exploration} and {commitment}.

\textbf{RepE.} At the exploration stage of $N_1$ rounds, $d$ actions, $a_1,\dots,a_d$, are repeatedly taken in sequence. These actions can be arbitrarily chosen but need to be linearly independent such that they span the action space defined by $\CA$. In this paper, we simply let $a_i=\lambda_0 e_i$, where $e_i$ is the $i$th  canonical vector of $\R^d$ and $\lambda_0>0$ is such that $a_i \in \CA$ for all $i$. Then, the coefficient $\theta$ is estimated by the least-squares regression
\begin{align*}
	\hat \theta=(X_{\rm E} X_{\rm E}^\top)^{-1}X_{\rm E}Y_{\rm E},
\end{align*}
where $X_{\rm E}=[x_1,\dots,x_{N_1}]$ and $Y_{\rm E}=[y_1,\dots,y_{N_1}]^\top$ respectively collect the actions and rewards at this stage.  At the commitment stage, the greedy action $x_t=\argmax_{x\in \CA} x^\top\hat\theta$ is taken for $N-N_1$ times.  

\textbf{RepT.} Different from RepE, RepT utilizes $\hat B \in \R^{d \times r}$ as a plug-in surrogate for the unknown representation $B$ to learn the coefficient $\theta$.  
The exploration stage is accomplished in the $r$-dimensional space $\Span(\hat B)$. Specifically, $r$ actions $a'_1,\dots,a'_r$, are repeatedly taken for $N_2$ rounds. These actions can be arbitrarily chosen in $\Span(\hat B)$ such that they are linearly independent. In this paper, we let $a_i'=\lambda_0 [\hat B]_i$, where $\lambda_0>0$  is such that $a_i'\in \CA$. To estimate $\theta$, RepT computes $\alpha$ first. Observe that $y_t=x_t^\top \theta+\eta_t =x_t^\top B\alpha+\eta_t$. Then, $\alpha$ is estimated with $\hat B$ by the least-squares regression
\begin{align*}
	\hat \alpha=(\hat B^\top X_{\rm T}X_{\rm T}^\top \hat B)^{-1}\hat B^\top X_{\rm T}Y_{\rm T},
\end{align*} 
where $X_{\rm T}=[x_1,x_2,\dots,x_{N_2}]$ and $Y_{\rm T}=[y_1,y_2,\dots,y_{N_2}]^\top$. Subsequently, $\theta$ is estimated by $\hat \theta=\hat B \hat \alpha$. Similar to RepE, $x_t=\argmax_{x\in \CA} x^\top\hat\theta$ is taken at the commitment stage.

\textbf{SeqRepL.} As shown in Algorithm~\ref{alg:SeqRepL}, there are two phases in each cycle of SeqRepL. Specifically, for each cycle $n\in\{1,2,\dots\}$, these phases are:

1) \textit{Representation Exploration} phase: $L$ tasks ($L$ is to be designed) are played using RepE. Let $\hat W=\frac{1}{nL}\sum_{i} \hat \theta_i \hat \theta_i^\top$ where $\hat \theta_i$'s are all the estimated coefficients obtained by RepE in all the previous $n$ cycles. Then, the representation $B$ is estimated by performing the singular value decomposition (SVD) to $\hat W$. Specifically, $\hat B$ takes the singular vectors associated with the $r$ largest singular values of $\hat W$, i.e., $\hat B=\hat U_1$ with $\hat U_1 \in \R^{d\times r}$ taken from the SVD:  $\hat W=[\hat U_1,\hat U_2]\hat\Sigma \hat V^\top$. 

2) \textit{Representation Transfer} phase: the latest representation estimate $\hat B$ is transferred. Specifically, $nL$ sequential tasks are played using RepT($\hat B$).

Notice that for any cycle $n$, $L$ more tasks are played using RepT than the previous $(n-1)$th cycle. We next show that this alternating scheme balances representation exploration and exploitation excellently.

\begin{assumption}[Task diversity]\label{Assump:diversity}
	Suppose that there exist an integer $\ell$ and a constant $\nu>0$  such that any subsequence of length $\ell$ in the sequence in Eq.~\eqref{Sequ_tasks} satisfies $\sigma_{r}(W_{s}W_{s}^\top /\ell ) \ge \frac{\nu}{r}>0$ for any $s$, where $W_{s}=[\theta_{s+1}, \dots,\theta_{s+\ell}]$.
\end{assumption}

This assumption states that the sequential tasks well spread the entire $r$-dimensional subspace. It ensures that this subspace can be reconstructed before all the bandit tasks are played, which is crucial to allow for transfer learning in the sequential setting. A similar assumption is found in \cite{Yang-Hu-Lee:2021}, wherein bandits are played concurrently. Representation learning in the sequential setting is more challenging, thus our assumption is slightly stronger.    

\begin{theorem}[Upper bound of SeqRepL]\label{regret:seq:single_B}
	Let the agent play the series of bandits in Eq.~\eqref{Sequ_tasks} using SeqRepL in Algorithm~\ref{alg:SeqRepL}.  Select an $L$ such that\footnote{Here, the exact knowledge of $\ell$ is not required; instead, knowing the order of $\ell$ is sufficient. In practice, the assumption can be further relaxed. In Fig.~\ref{Diff_L}, we will show that a wide range of $L$ can be chosen without knowing $\ell$, while still guaranteeing the performance of our algorithms.} $L =\Theta(\ell)$, and let $N_1=dr\sqrt{{N}/{L}}$ and $N_2=r\sqrt{N}$. Then, the regret $R_{\tau N}$ of SepRepL satisfies 
	\begin{align}\label{upper_bound}
		\BE [R_{\tau N}]= {\tilde O \Big({d  r \sqrt{\tau N}}+ {\frac{d  r}{\lambda_0^2 \nu^2} \sqrt{\tau N}}+  \frac{d \ell}{r} \sqrt{\tau N } + \tau  r \sqrt{N} \Big)}.
	\end{align}
\end{theorem}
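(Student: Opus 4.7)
The plan is to decompose $R_{\tau N}$ into the contributions of the Representation Exploration (RepE) and Representation Transfer (RepT) sub-phases and analyze each in turn. First I would count cycles: since the $n$th cycle plays $L$ RepE tasks followed by $nL$ RepT tasks, after $n$ complete cycles the total number of tasks is $nL+Ln(n+1)/2=\Theta(Ln^2)$, so reaching $\tau$ tasks takes $n=O(\sqrt{\tau/L})$ cycles, of which only $nL=O(\sqrt{\tau L})=O(\sqrt{\tau\ell})$ are spent on RepE.

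For each RepE task, the cyclic design $x_t=\lambda_0 e_{((t-1)\bmod d)+1}$ produces $X_{\rm E}X_{\rm E}^\top=\lambda_0^2\lceil N_1/d\rceil I_d$, so a standard least-squares calculation yields $\BE\|\hat\theta_i-\theta_i\|^2=O(d^2/(\lambda_0^2 N_1))$. Because $\CA$ is an ellipsoid and $\|\theta_i\|\ge\theta_{\min}$, the commitment-phase per-round regret is \emph{quadratic} in estimation error, i.e.\ $O(\|\hat\theta_i-\theta_i\|^2)$. Hence the per-task RepE regret is $O(N_1+Nd^2/(\lambda_0^2 N_1))$; multiplying by $nL$ and substituting $N_1=dr\sqrt{N/L}$, $L=\Theta(\ell)$, $n=O(\sqrt{\tau/L})$ produces the first summand $dr\sqrt{\tau N}$ and the third summand $(d\ell/r)\sqrt{\tau N}$ of \eqref{upper_bound}.

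For RepT, the central quantity is the subspace distance $\dist(\hat B_n,B)$, where $\hat B_n$ is the top-$r$ left-singular basis of $\hat W_n=(nL)^{-1}\sum_{i\le nL}\hat\theta_i\hat\theta_i^\top$. I would write $\hat W_n-W_n$, with $W_n=(nL)^{-1}\sum_i\theta_i\theta_i^\top$, as a sum of cross terms $(\hat\theta_i-\theta_i)\theta_i^\top+\theta_i(\hat\theta_i-\theta_i)^\top$ plus a small quadratic correction, and apply a matrix Bernstein inequality to obtain $\|\hat W_n-W_n\|=\tilde O(d/(\lambda_0\sqrt{N_1\,nL}))$. Assumption~\ref{Assump:diversity} supplies $\sigma_r(W_n)\ge\nu/r$ whenever $nL\ge\ell$, so Wedin's $\sin\Theta$ theorem gives $\dist(\hat B_n,B)=\tilde O(dr/(\lambda_0\nu\sqrt{N_1\,nL}))$. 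Decomposing $\hat\theta-\theta$ in RepT into the projection bias $(I-\hat B_n\hat B_n^\top)\theta$ and the $r$-dimensional least-squares noise, the per-task RepT regret becomes $O(N_2+Nr^2/(\lambda_0^2 N_2)+N\theta_{\max}^2\dist(\hat B_n,B)^2)$. With $N_2=r\sqrt N$, summing the first two pieces over $\tau$ tasks gives the fourth term $\tau r\sqrt N$, while summing $nL\cdot N\dist(\hat B_n,B)^2$ across the $O(\sqrt{\tau/L})$ cycles (the per-cycle contribution $O(Nd^2r^2/(\lambda_0^2\nu^2 N_1))$ is independent of $n$ because the $nL$ factor cancels with the $\dist^2$ scaling) produces the second term $dr\sqrt{\tau N}/(\lambda_0^2\nu^2)$.

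The main technical obstacle is the matrix-concentration bound for $\|\hat W_n-W_n\|$: the summands are random matrices whose per-sample norms are not uniformly bounded, so a careful matrix Bernstein argument — likely combined with a truncation step to handle the sub-Gaussian exploration noise — is needed to avoid losing extra $d$ or logarithmic factors that would damage the $\lambda_0^2\nu^2$ scaling in the second term. A secondary subtlety is that Assumption~\ref{Assump:diversity} activates only once $nL\ge\ell$, so the contribution of the first $O(1)$ "burn-in" cycles, where $\hat B_n$ may be arbitrary, must be bounded separately and shown to be absorbed into the stated $\tilde O$ bound.
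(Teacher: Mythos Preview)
Your plan is essentially the paper's own proof: the same cycle count $\bar L=\Theta(\sqrt{\tau/L})$, the same per-task RepE bound $O(N_1+Nd^2/N_1)$ (Lemma~\ref{regret:RepL}), the same per-task RepT bound $O(N_2+Nr^2/N_2+N\varepsilon^2)$ (Lemma~\ref{transfer:single}), and the same matrix-Bernstein-plus-Davis--Kahan argument for $\dist(\hat B_n,B)=\tilde O\bigl(dr/(\lambda_0\nu\sqrt{N_1\,nL})\bigr)$ (Theorem~\ref{dist:B}). The one execution detail worth noting is that the paper centers at $W'_n=W_n+D$ with $D=(d/\lambda_0^2 N_1)I_d=\BE[(\hat\theta_i-\theta_i)(\hat\theta_i-\theta_i)^\top]$ rather than at $W_n$, so the Bernstein summands are exactly mean-zero and, since $D$ is a scalar multiple of the identity, $W_n$ and $W'_n$ share top-$r$ eigenvectors; this removes the bias in your ``quadratic correction'' term cleanly and is slightly simpler than a separate truncation step.
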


The third term in \eqref{upper_bound} is the regret incurred when transferring the oracle representation, and the first three terms include the regret that results from representation exploration and transferring the estimated representation with errors. 

\begin{remark}[Performance comparison]\label{remark:Th1}
	Recall that if the same series of bandits are played using standard algorithms that play bandits independently, e.g., UCB \cite{Dani-Hayes-Kakade:2008}, PEGE \cite{RP-TJN:2010}, and ETC \cite{AYY-AA-SC:2009}, the best regret is optimally $\Theta(\tau d\sqrt{N})$. Compared to these algorithms, SeqRepL can have better or worse performance (i.e., ``positive'' or ``negative'' transfer), depending on the properties of the bandit tasks in \eqref{Sequ_tasks}:
		\begin{itemize}
			\item If $\tau \gg \max\{r^2, \frac{r^2}{\lambda_0^4\nu^4},\frac{\ell^2}{r^2}\}$ and $d\gg r$, using SeqRepL can significantly improve the performance\footnote{We note that positive transfer can still occur in practice without satisfying this inequality. In Figs.~\ref{synthetic} and \ref{Diff_L}, we let $\tau=400$ for $\lambda_0=1,\ell=r=3$, and $\nu\approx 0.01$, much smaller than $9\times 10^{8}$ required by  the inequality here. Nevertheless, our algorithms still outperform the standard ones significantly.};
			\item If $\tau \ll \max\{r^2, \frac{r^2}{\lambda_0^4\nu^4},\frac{\ell^2}{r^2}\}$, the bound in  \eqref{upper_bound} implies that the cost of learning the representation can overwhelm the possible benefits of transfer learning. Then, using SeqRepL may result in a situation of negative transfer.
	\end{itemize}	
	Our algorithm is particularly advantageous over the standard ones when there are a large number of tasks in the sequence. Also, in sharp contrast to existing bandit algorithms using representation learning, e.g., (\cite{Yang-Hu-Lee:2021,HJ-CX-JC-LL-WL:2021}) our algorithm requires no knowledge of $\tau$.
\end{remark} 

The following corollary provide an upper bound of SeqRepL if $\ell$ in Assumption~\ref{Assump:diversity} is of the order of $r^2$. 


\begin{corollary}\label{Coro_seq}
	Assume that $\ell$ in Assumption~\ref{Assump:diversity} is of the order of $r$, i.e., $\ell=\Theta(r^2)$. Let $N_1=d\sqrt{r N}$ and $N_2=r\sqrt{N}$. Then, the regret of SepRepL for the sequential bandits in Eq.~\eqref{Sequ_tasks} satisfies $\BE [R_{\tau N}]= \tilde O\left( d r \sqrt{\tau N}+ {\frac{d  r}{\lambda_0^2 \nu^2} \sqrt{\tau N}}+  \tau r \sqrt{N} \right)$. 
\end{corollary}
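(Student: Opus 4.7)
The plan is to obtain the corollary as a direct specialization of Theorem~\ref{regret:seq:single_B} to the regime $\ell=\Theta(r^2)$. Since the corollary's hypotheses fix no structure on the bandits beyond Assumption~\ref{Assump:diversity} with the extra scaling $\ell=\Theta(r^2)$, nothing beyond a substitution argument should be necessary. I would begin by invoking Theorem~\ref{regret:seq:single_B} with the admissible choice $L=\Theta(\ell)=\Theta(r^2)$ so that the theorem's hypothesis on $L$ is satisfied.

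Next, I would verify that the corollary's choice of exploration lengths is exactly the theorem's prescription under this scaling. The theorem uses $N_1=dr\sqrt{N/L}$, and with $L=\Theta(r^2)$ this yields $N_1=\Theta(d\sqrt{rN})$, which matches the stated $N_1=d\sqrt{rN}$ up to constants. The value $N_2=r\sqrt{N}$ is unchanged. Hence all of the theorem's conditions hold and its conclusion \eqref{upper_bound} applies.

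It then remains to simplify the four-term bound. Plugging $\ell=\Theta(r^2)$ into the third term gives
\begin{align*}
\frac{d\ell}{r}\sqrt{\tau N}=\Theta\!\left(\frac{d\,r^{2}}{r}\sqrt{\tau N}\right)=\Theta\!\left(dr\sqrt{\tau N}\right),
\end{align*}
so this term is absorbed into the first term $dr\sqrt{\tau N}$ of \eqref{upper_bound}. The remaining three surviving terms are precisely those appearing in the corollary's bound, which completes the argument.

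Because the proof amounts to plugging $\ell=\Theta(r^2)$ into pre-existing formulas, there is no genuine obstacle; the only thing to be careful about is bookkeeping on constants and $\tilde{O}$ factors, namely confirming that $L=\Theta(\ell)$ is compatible with the theorem's requirement (it is, by definition) and that the collapse of the $\frac{d\ell}{r}\sqrt{\tau N}$ term into $dr\sqrt{\tau N}$ happens inside the $\tilde{O}(\cdot)$ with only a constant-factor inflation, so no logarithmic factors beyond those already hidden in $\tilde{O}$ are introduced.
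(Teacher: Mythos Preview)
Your proposal is correct and matches the paper's (implicit) approach: the paper states the corollary immediately after Theorem~\ref{regret:seq:single_B} without a separate proof, treating it as a direct specialization obtained by substituting $\ell=\Theta(r^2)$ (hence $L=\Theta(r^2)$, $N_1=dr\sqrt{N/L}=\Theta(d\sqrt{rN})$) into the bound \eqref{upper_bound} and absorbing the resulting $\frac{d\ell}{r}\sqrt{\tau N}=\Theta(dr\sqrt{\tau N})$ term into the first term. There is nothing to add.
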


If $\frac{1}{\lambda_0^2 \nu^2}=\Theta(1)$, the regret bound becomes $\BE [R_{\tau N}]= \tilde O\left( d r \sqrt{\tau N}+  \tau r \sqrt{N} \right)$. For the series of bandits in Eq.~\eqref{Sequ_tasks}, it follows from \cite{Yang-Hu-Lee:2021} that the lower bound is $\Omega (d\sqrt{r\tau  N}+\tau r \sqrt{N})$. Note that there is just a gap of $\tilde{O}(\sqrt{r})$ between our upper bound and this lower bound.

\subsection{Analysis of Theorem~\ref{regret:seq:single_B}}

We first provide some instrumental results, and we refer the readers to Appendix~\ref{pf:lm1}-\ref{pf:Th2} for their proofs.

\begin{lemma}[Regret of RepE]\label{regret:RepL}
	Given a bandit task $\theta\in \mathcal T$,  let the agent play it using RepE in Algorithm~\ref{alg:RE} for $N$ rounds. Then, the regret $R_N$ of RepE satisfies $\BE [R_N] =  O(N_1+\frac{N}{N_1}d^2)$. 
\end{lemma}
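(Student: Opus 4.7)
The plan is to split the regret of RepE into its exploration phase (rounds $1,\dots,N_1$) and its commitment phase (rounds $N_1+1,\dots,N$), bound each contribution separately, and sum. The exploration bound is a direct boundedness argument, while the commitment bound requires (i) a covariance-type control of the estimation error $\hat\theta-\theta$, and (ii) a smoothness argument showing that playing greedily against $\hat\theta$ incurs per-round regret that is \emph{quadratic} in that error. Under Assumption~\ref{Assump_1}, the ellipsoidal action set $\mathcal A$ is bounded and $\|\theta\|\le\theta_{\max}$, so each single-round gap $(x_t^*-x_t)^\top\theta$ during the exploration phase is at most a constant determined by $M$, $\lambda_0$, and $\theta_{\max}$; summing over the $N_1$ rounds yields the $O(N_1)$ contribution.

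For the commitment phase I would first control the least-squares estimator. Because the $d$ exploratory actions $a_i=\lambda_0 e_i$ are played in round-robin order, the design matrix is (essentially) diagonal: $X_{\rm E}X_{\rm E}^\top=\lambda_0^2\,\mathrm{diag}(n_1,\dots,n_d)$ with $n_i\in\{\lfloor N_1/d\rfloor,\lceil N_1/d\rceil\}$, so the inverse $(X_{\rm E}X_{\rm E}^\top)^{-1}$ has spectral norm $O\bigl(d/(\lambda_0^2 N_1)\bigr)$. Writing $\hat\theta-\theta=(X_{\rm E}X_{\rm E}^\top)^{-1}X_{\rm E}\eta_{\rm E}$ and using the $1$-sub-Gaussian assumption on $\eta_{\rm E}$, one obtains $\mathbb{E}\|\hat\theta-\theta\|^2\le\Tr\bigl((X_{\rm E}X_{\rm E}^\top)^{-1}\bigr)=O(d^2/N_1)$.

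Next I would turn the estimation error into a per-round commit regret. On the ellipsoid $\mathcal A$ the support-function maximizers admit the closed forms $x_t^*=M\theta/\sqrt{\theta^\top M\theta}$ and $\hat x=M\hat\theta/\sqrt{\hat\theta^\top M\hat\theta}$, so a second-order expansion of $\sqrt{\theta^\top M\theta}-\hat x^\top\theta$ at $\hat\theta=\theta$ (equivalently, a Bregman-divergence argument exploiting the convexity and $1$-homogeneity of the support function) combined with the lower bound $\|\theta\|\ge\theta_{\min}$ gives $(x_t^*-\hat x)^\top\theta=O(\|\hat\theta-\theta\|^2)$. This is exactly the support-function smoothness step underlying the PEGE analysis \cite{RP-TJN:2010}, which I would invoke as a black box where possible. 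Taking expectations, multiplying by the $N-N_1$ commit rounds, and combining with the exploration bound then yields $\mathbb{E}[R_N]=O(N_1+Nd^2/N_1)$.

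The main obstacle is this quadratic per-step regret bound, since it requires a Hessian/second-order argument that is delicate because the support function of the ellipsoid is $1$-homogeneous (so the Hessian is degenerate along $\theta$); the remaining pieces---the diagonal structure of $X_{\rm E}X_{\rm E}^\top$ and the resulting trace estimate---are routine calculations.
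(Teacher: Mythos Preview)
Your proposal is correct and follows essentially the same route as the paper: split into exploration and commitment, use the (near-)diagonal design to get $\mathbb{E}\|\hat\theta-\theta\|^2=O(d^2/N_1)$, and then invoke the ellipsoid support-function smoothness from \cite{RP-TJN:2010} together with $\|\theta\|\ge\theta_{\min}$ to obtain a per-round commit regret $O(\|\hat\theta-\theta\|^2)$. The paper simply assumes $N_1$ is a multiple of $d$ and cites \cite{RP-TJN:2010} directly for both the variance bound and the quadratic-regret step, so your ``main obstacle'' is handled there exactly as you planned---as a black box.
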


\begin{lemma}[Regret of RepT]\label{transfer:single}
	Given a bandit task $\theta\in \mathcal T$, assume that there exists $B\in \R^{d\times r}$ with orthonormal columns such that $\theta = B \alpha$ for some $\alpha\in \R^{r}$. Assume that an estimate $\hat B$ is known and satisfies $\|\hat B^\top B_\perp\|_F  \le \varepsilon$. Let the agent play this task for $N$ rounds using RepT($\hat B$) in Algorithm~\ref{alg:RT}, then the regret satisfies $\BE [R_N]= O( N_2+ \frac{N }{N_2}r^2+ N \varepsilon^2 ). $
\end{lemma}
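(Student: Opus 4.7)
\textbf{Proof proposal for Lemma~\ref{transfer:single}.} The plan is to split RepT's $N$ rounds into the $N_2$ exploration rounds and the $N-N_2$ commitment rounds, bound each separately, and add. Exploration-phase regret is immediately $O(N_2)$: by Assumption~\ref{Assump_1} the actions $a_i'=\lambda_0[\hat B]_i$ all have norm $\lambda_0$ and $\|\theta\|\le\theta_{\max}$, so per-round regret is $O(1)$. All the work is thus in (i) controlling the mean-squared estimation error $\BE\|\hat\theta-\theta\|^2$ produced by the exploration regression, and (ii) converting this error into commitment-phase regret on the ellipsoid $\CA$.

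For step (i), the cyclic exploration design gives $X_{\rm T}=\lambda_0\hat B C$ for an $r\times N_2$ cyclic pattern matrix $C$ with $CC^\top=(N_2/r) I_r$ (taking $N_2$ a multiple of $r$; the general case differs by a constant). Substituting $Y_{\rm T}=X_{\rm T}^\top\theta+\eta$ into the closed-form least-squares estimate collapses the Gram factor and yields $\hat\alpha=\hat B^\top\theta+\frac{r}{\lambda_0 N_2}C\eta$, so
\begin{align*}
\hat\theta-\theta \;=\; -(I-\hat B\hat B^\top)\theta \;+\; \tfrac{r}{\lambda_0 N_2}\,\hat B C\eta .
\end{align*}
The first term is pure bias: writing $\theta=B\alpha$ and using $(I-\hat B\hat B^\top)B=\hat B_\perp\hat B_\perp^\top B$, together with the symmetry $\|\hat B_\perp^\top B\|_F=\|\hat B^\top B_\perp\|_F\le\varepsilon$ of the principal-angle sines and $\|\alpha\|=\|\theta\|\le\theta_{\max}$, gives $\|(I-\hat B\hat B^\top)\theta\|^2\le\varepsilon^2\theta_{\max}^2$. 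The second term is pure noise; since $\eta$ has independent $1$-sub-Gaussian coordinates and $\hat B$ is orthonormal, $\BE\|\hat B C\eta\|^2=\Tr(C^\top C)=N_2$, so the noise contribution to $\BE\|\hat\theta-\theta\|^2$ is $O\!\left(r^2/(\lambda_0^2 N_2)\right)$. Combining, $\BE\|\hat\theta-\theta\|^2=O(r^2/N_2+\varepsilon^2)$, with $\lambda_0$ and $\theta_{\max}$ absorbed into the constant.

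For step (ii), on the ellipsoid $\CA=\{x:x^\top M^{-1}x\le 1\}$ the greedy maximizer admits the closed form $x^\star_\theta=M\theta/\sqrt{\theta^\top M\theta}$ with $\max_{x\in\CA} x^\top\theta=\sqrt{\theta^\top M\theta}$. A second-order expansion of this square-root map around $\theta$, using $\|\theta\|\ge\theta_{\min}$ from Assumption~\ref{Assump_1} to keep the denominator bounded below, yields the standard quadratic bound $\theta^\top(x^\star_\theta-x^\star_{\hat\theta})=O(\|\hat\theta-\theta\|^2)$; this is exactly the per-round regret estimate used in the PEGE analysis~\cite{RP-TJN:2010}. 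Multiplying by $N-N_2$ and adding the $O(N_2)$ exploration cost produces $\BE[R_N]=O(N_2+(N/N_2)r^2+N\varepsilon^2)$, as claimed.

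The main obstacle I expect is the quadratic per-round bound in step (ii): the map $\theta\mapsto x^\star_\theta$ is only Lipschitz in a neighborhood where $\sqrt{\hat\theta^\top M\hat\theta}$ stays bounded below, so one must condition on a high-probability event that $\|\hat\theta-\theta\|$ is small (controlled via a sub-Gaussian tail bound on the regression noise) and show that the complementary event contributes only lower-order regret via the deterministic per-round $O(1)$ bound. This case analysis is standard but is the one place where genuine care is needed.
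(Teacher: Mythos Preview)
Your proposal is correct and follows essentially the same route as the paper: the same bias--variance decomposition $\hat\theta-\theta=-(I-\hat B\hat B^\top)B\alpha+\text{(noise)}$, the same bounds $\|s_1\|^2\le\theta_{\max}^2\varepsilon^2$ and $\BE\|s_2\|^2\le r^2/(\lambda_0^2 N_2)$, and the same quadratic per-round commitment bound on the ellipsoid from~\cite{RP-TJN:2010}. Your concern in the last paragraph is actually unnecessary: the PEGE inequality $\max_{x\in\CA}x^\top\theta-\hat x^\top\theta\le J\|\hat\theta-\theta\|^2/\|\theta\|$ holds globally on the ellipsoid (not only in a neighborhood), so the paper simply takes expectations directly without any conditioning or tail argument.
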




\begin{theorem}[Accuracy of learned representation]\label{dist:B}
	Let the agent play the series of bandits in Eq.~\eqref{Sequ_tasks} using SeqRepL in Algorithm~\ref{alg:SeqRepL}. Then, for any cycle $n$, the estimate $\hat B$ at the end of the representation exploration phase satisfies
	\begin{align}
		\|\hat B^\top B_\perp\|_F \le \tilde O\left( \frac{d r}{\lambda_0 \nu}{\sqrt{\frac{1}{ nL N_1}}}  \right), \label{B:accuracy}
	\end{align}
	with probability at least $1-\frac{1}{kN_1}$.
\end{theorem}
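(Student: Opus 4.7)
The plan is to write $\hat W$ as a perturbation of a deterministic target whose top-$r$ eigenspace is exactly $\Span(B)$, and then invoke a Frobenius-norm Davis--Kahan $\sin\Theta$ bound. Index the $nL$ RepE episodes collected through cycle $n$ by $i=1,\dots,nL$ and let $\Delta_i := \hat\theta_i-\theta_i$ be the corresponding least-squares errors. Because RepE probes the $d$ canonical directions with common scaling $\lambda_0$, its design matrix satisfies $X_{\rm E}X_{\rm E}^\top=(N_1/d)\lambda_0^2 I_d$, so $\Delta_i=(d/(N_1\lambda_0^2))X_{\rm E}\bm\eta_i$ is zero-mean sub-Gaussian with coordinate variance proxy $d/(\lambda_0^2 N_1)$, and the $\Delta_i$ are mutually independent across episodes (each episode uses disjoint noise). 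Defining $\bar W := \frac{1}{nL}\sum_{i=1}^{nL}\theta_i\theta_i^\top$, the decomposition is
\[
\hat W - \bar W \;=\; \frac{1}{nL}\sum_{i=1}^{nL}\bigl(\theta_i\Delta_i^\top + \Delta_i\theta_i^\top + \Delta_i\Delta_i^\top\bigr) \;=:\; E.
\]

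For the spectral structure, $\theta_i=B\alpha_i$ gives $\bar W = B\bigl(\frac{1}{nL}\sum_i\alpha_i\alpha_i^\top\bigr)B^\top$, so $\Span(\bar W)\subseteq\Span(B)$ and $\sigma_{r+1}(\bar W)=0$. Choosing $L=\Theta(\ell)$, the $L$ RepE tasks in each cycle form a consecutive block in the original sequence, so Assumption~\ref{Assump:diversity} applied blockwise gives $\sigma_r\ge\nu/r$ for every cycle's empirical second moment; averaging preserves this bound and yields the spectral gap $\sigma_r(\bar W)-\sigma_{r+1}(\bar W)\ge\nu/r$. To control $\|E\|_F$, independence and mean-zero of the $\Delta_i$ kill all cross-index contributions in the second moment of the dominant term:
\[
\BE\bigl\|\tfrac{1}{nL}\textstyle\sum_i\theta_i\Delta_i^\top\bigr\|_F^2 \;=\; \tfrac{1}{(nL)^2}\sum_i\|\theta_i\|^2\,\BE\|\Delta_i\|^2 \;=\; O\!\left(\tfrac{d^2}{\lambda_0^2\, nL\, N_1}\right).
\]
A Hanson--Wright--type inequality (quadratic chaos in the sub-Gaussian $\bm\eta$'s) promotes this to the high-probability bound $\|\tfrac{1}{nL}\sum_i\theta_i\Delta_i^\top\|_F = \tilde O(d/(\lambda_0\sqrt{nLN_1}))$; the symmetric cross term is handled identically. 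The quadratic remainder $\frac{1}{nL}\sum_i\Delta_i\Delta_i^\top$ splits as a deterministic multiple of $I_d$ (which does not disturb the top-$r$ eigenspace of $\bar W$ and can be absorbed into $\bar W$) plus a lower-order mean-zero fluctuation handled by the same machinery.

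Combining the eigengap with $\|E\|_F = \tilde O(d/(\lambda_0\sqrt{nLN_1}))$, the Frobenius-norm Davis--Kahan $\sin\Theta$ theorem gives
\[
\|\hat B^\top B_\perp\|_F \;\le\; \frac{\|E\|_F}{\sigma_r(\bar W)-\sigma_{r+1}(\bar W)} \;=\; \tilde O\!\left(\frac{dr}{\lambda_0\nu}\sqrt{\frac{1}{nLN_1}}\right),
\]
matching~\eqref{B:accuracy}. The main obstacle is the concentration step: the summands $\theta_i\Delta_i^\top$ are heterogeneous and only sub-Gaussian rather than bounded, so upgrading the second-moment estimate to a Frobenius-norm tail bound requires a Hanson--Wright (or analogous order-2 chaos) argument, and the episode-level failure probabilities must be combined by a careful union bound to attain the confidence level stated in the theorem.
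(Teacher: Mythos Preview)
Your proposal is correct and follows essentially the same architecture as the paper's proof: both center $\hat W$ at $W+D$ with $D=\tfrac{d}{\lambda_0^2 N_1}I_d$ (your ``absorbed identity''), bound the random perturbation, and finish with Davis--Kahan against the eigengap $\nu/r$ supplied by Assumption~\ref{Assump:diversity}. The only difference is the concentration tool---the paper applies a matrix Bernstein inequality directly to the centered summands $z_i=\tfrac{1}{k}(\hat\theta_i\hat\theta_i^\top-\theta_i\theta_i^\top-D)$, whereas you separate cross and quadratic pieces and invoke Hanson--Wright on the Frobenius norm---but this is a technical choice rather than a genuinely different route.
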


Recall that $\|\hat B^\top B_\perp\|_F$ measures the distance between $\hat B$ and the true representation $B$. This distance decreases with $n$, implying that $\hat B$ becomes progressively more accurate as more tasks are explored by the RepE algorithm. 


\begin{pfof}{Theorem~\ref{regret:seq:single_B}}	
	In the $n$th cycle of SepRepL, $L$ tasks are played in the representation exploration phase. Then, it follows from Lemma~\ref{regret:RepL} that the regret in this phase, denoted as $R_{\rm RepE} (n)$, satisfies
	$
	\BE R_{\rm RepE}(n) = O \big(L N_1+ L \frac{N}{N_1}d^2\big).
	$
	From Theorem~\ref{dist:B}, we have 
	\begin{align*}
		\|\hat B^\top B_\perp\|_F \le \tilde O \Big( \frac{d r}{\lambda_0 \nu}{\sqrt{\frac{1}{ n L  N_1}}}  \Big).
	\end{align*}
	Then, $n L$ bandit tasks are played utilizing the RepT($\hat B$) algorithm.  It follows from Lemma~\ref{transfer:single} that the regret in the RepT phase, denoted as $R_{\rm RepT} (n)$, satisfies
	\begin{align*}
		\BE [R_{\rm RepT} (n)] &=  \tilde O \big( {n L } N_2 + nL\frac{N}{N_2}r^2 + {n L} N \frac{d^2 r^2 }{\lambda_0^2 \nu^2}{\frac{1}{ n L  N_1}} \big)\\
		&=  \tilde O \big({n L } N_2 + nL \frac{N}{N_2}r^2 +{\frac{d^2 r^2}{\lambda_0^2 \nu^2}} \frac{N}{N_1} \big).
	\end{align*}
	Observe that there are at most $\bar L=\lceil \sqrt {{2 \tau}/{L}}\rceil$ cycles in the series \eqref{Sequ_tasks} since $L \bar L+L\bar L(\bar L+1)/2 \ge \tau$. 	Summing up the regret in the representation exploration and exploitation phases in all the cycles, we obtain
	\begin{align*}
		\BE [R_{\tau N}] &= {\sum_{n=1}^{\bar L} \big( R_{\rm RepL} (n) + R_{\rm RepT} (n) \big) }\\
		&\le {\tilde O \Big(  \bar L  \big(L N_1+ L \frac{N}{N_1}d^2\big)+\sum_{n=1}^{\bar L} \big( {n L } N_2 + nL \frac{N}{N_2}r^2 + {\frac{d^2 r^2}{\lambda_0^2 \nu^2}} \frac{N}{N_1}\big) \Big)}.
	\end{align*}
	Since $N_1 = {d r \sqrt{N/L}}$, $N_2=r\sqrt{N}$ and $\bar L=\lceil \sqrt {{2 \tau}/{L}}\rceil$, we have
	$
	\BE [R_{\tau N}] = \tilde O \big({d  r \sqrt{\tau N}}+ {\frac{d  r}{\lambda_0^2 \nu^2} \sqrt{\tau N}}+  \frac{d L}{r} \sqrt{N \tau} + \tau  r \sqrt{N} \big).
	$
	Then, \eqref{upper_bound} follows from $L=\Theta(r^2)$. 
\end{pfof}

\section{Environment  Change Detection}\label{sec:detection}

To handle environment  changes, we propose the representation change detection algorithm (RepCD, see Algorithm~\ref{alg:RepCD}). It is the key to endow the agent with adaptability. 

\subsection{Representation change detection algorithm}
To show how RepCD works, we consider the environment  change where the representation switches from $B\in \R^{d \times r}$ to $\bar B \in \R^{d \times r}$ (see Fig.~\ref{Detection}~(a)). 
To detect this environment  change, we seek for the tasks that do not belong to the subspace $\Span(B)$. To infer whether a task is an outlier to $\Span(B)$, RepCD takes some \textit{probing actions} and monitors the rewards. The probing actions need to ensure: (1) when a task is not in $\Span(B)$, it can be detected as an outlier with high probability; and (2) when a task is in $\Span(B)$, it can be falsely detected as an outlier with low probability. 

The key idea is to select probing actions in the \textit{orthogonal complement} $\Span(B_\perp)$, which is illustrated in Fig.~\ref{Detection}~(b).
A task $\theta$ is in the subspace  $\Span(B)$ \textit{if and only if} $B^\top_\perp \theta=0$. To generate an accurate test, all the directions defined by the columns of $B_\perp$ need to be covered by the probing actions. A naive strategy is to choose $d-r$ actions by simply exhausting the columns of $B_\perp$, i.e., let $x_i=\lambda_0 [B_{\perp}]_i^\top, i = 1,\dots, d-r$, where $\lambda_0>0$ is such that $\lambda_0 [B_{\perp}]_i\in \CA$. Taking these actions, the agent is expected to receive rewards satisfying $y_i=\lambda_0[B_{\perp}]_i^\top\theta+\eta_i=\eta_i$ if $\theta\in \Span(B)$. If the agent receives some rewards that exceed the noise level, the task $\theta$ is likely an outlier of the current representation. 

However, one may not need as many as $d-r$ probing actions if the environment  change happens between two very different representations. It is also possible that more than $d-r$ probing actions are required to detect a more subtle environment  change. Next, we show how to choose probing actions by taking both of these situations into account.

First, let $n_{\rm det}$ be the number of probing actions (we will show how to select $n_{\rm det}$ soon). Observe that any $n_{\rm det}$ can be rewritten into $n_{\rm det}=k(d-r)+\bar n$, where $k$ can be $0,1,2\dots$. The first $k(d-r)$ probing actions simply take the actions $\{x_i=\lambda_0 [B_{\perp}]_i,i=1,\dots,d-r\}$ for $k$ times. How to choose the remainder of $\bar n$ actions is more interesting. 

We require all the $d-r$ directions in $\Span(B_\perp)$ to be covered, which ensures that informed decisions are made for representation change detection, especially when $k=0$. To do that, we use the idea of \textit{random projection} \cite{VR-book:2008}. First, we generate a projection matrix $P$ that projects from $\R^{d-r}$ onto a random $\bar n$-dimensional subspace uniformly distributed in the Grassmann manifold $G_{(d-r),\bar n}$ (which consists of all $\bar n$-dimensional subspaces in $\R^{d-r}$). One can obtain a matrix $Q\in \R^{(d-r)\times \bar n}$ with orthonormal columns that satisfies $P= Q Q^\top$. Then, the $\bar n$ remaining actions are generated by taking the columns out from the matrix $M= B_{\perp} Q$. 

\begin{figure}[t]
	\centering
	\includegraphics[scale=1.6]{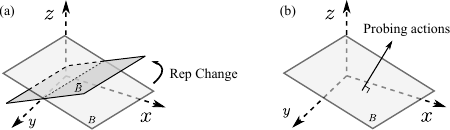}
	\caption{Illustration of environment  change detection. (a) Along with the environment  change, the representation switches from $B$ to $\bar B$. (b) Probing actions are randomly generated in the orthogonal complement of the current representation.}
	\label{Detection}
\end{figure}

Finally, we have completed selecting all the $n_{\rm det}$ probing actions, which are included in the set
\begin{align}
	\CA_{\rm det}:= \{\underbrace{\hat \CA,\dots\hat \CA}_{k}, \lambda_0 [M]_1,\dots,\lambda_0 [M]_{\bar n} \},
\end{align}
where $\hat \CA:=\{\lambda_0 [B_{\perp}]_1, \dots,\lambda_0 [B_{\perp}]_{(d-r)}\}$, and $\lambda_0 $ is a scalar such that all the actions in $\CA_{\rm det}$ is in $\CA$. 

Let $Y_{\rm det}=[y_1,\dots,y_{n_{\rm det}}]^\top$ collect the rewards. We build a confidence interval for $Y_{\rm det}$, which is 
\begin{align*}
	\mathcal C_{\rm det} = \left\{ Y\in \R^{n_{\rm det}}:  \left| \frac{1}{\sqrt{n_{\rm det}} } \|Y\|_2 - 1  \right|\le \xi_{\rm det}\right\},
\end{align*}
where $\xi_{\rm det}$ is the \textit{detection threshold}. For the task $\theta$, if $Y_{\rm det}\in \mathcal C_{n_{\rm det}}$ is observed, we say $\theta$ is not an outlier; if $Y_{\rm det}\notin \mathcal C_{\rm det}$ is observed, we say $\theta$ is an outlier and, subsequently, there is an environment  change. 

\begin{algorithm}[t]
	\caption{Rep Change Detection (RepCD($B$))}
	\label{alg:RepCD}
	\begin{algorithmic}[1]
		\footnotesize
		\State \textbf{Input:} $B \in \R^{d\times r}$, $n_{\rm det}$.
		\State take $n_{\rm det}$ probing actions in $\CA_{\rm det}$, $Y_{\rm det}=[y_1,\dots,y_{n_{\rm det}}]^\top$
		\If{$Y_{\rm det}\notin \mathcal C_{n_{\rm det}}$}
		\State	Rep Change indicator $\mathbb I_{\rm det}=1$
		\EndIf
	\end{algorithmic}
\end{algorithm}

The next lemma shows how to select the detection threshold $\xi_{\rm det}$ and the number of probing actions $n_{\rm det}$ such that an outlier can be detected with high probability. 

\begin{lemma}[Outlier detection: oracle representation]\label{lemma:RepCD}
	Consider two representations $B$ and $\bar B$ and any task $\theta$ satisfying $\theta=\bar B \alpha$ for some $\alpha$. Assume that $\sin \bm{\theta}_r (B, \bar B) = \kappa_1$. Let 
	\begin{align}\label{det_selection}
		n_{\rm det} = {\Bigl \lceil\frac{9(d-r)\log(2S^2 N)}{ \theta_{\min}^2 \lambda_0^2 \kappa_1^2}} \Bigr \rceil,  \xi_{\rm det} = {\sqrt{\frac{\log (2S^2 N)}{4 n_{\rm det}}}}.
	\end{align}
	Then, the task $\theta$ can be detected as an outlier to $B$ by RepCD($B$) in Algorithm~\ref{alg:RepCD} with probability at least
	$
	1-O(\frac{1}{S^2N}).
	$
	
\end{lemma}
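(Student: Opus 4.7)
The plan is to analyze the test statistic $\|Y_{\rm det}\|_2/\sqrt{n_{\rm det}}$ under the outlier hypothesis $\theta = \bar B\alpha$ and show that it exceeds $1+\xi_{\rm det}$ with probability at least $1-O(1/(S^2 N))$. Writing $y_i = x_i^\top \theta + \eta_i$ and $\mu_i := x_i^\top \theta$, I would decompose $\|Y_{\rm det}\|_2^2 = \|\mu\|_2^2 + 2\mu^\top \eta + \|\eta\|_2^2$ and proceed in three steps: a deterministic lower bound on the signal $\|\mu\|^2$, sub-Gaussian/sub-exponential concentration of the two noise contributions, and a final comparison with the detection threshold after taking square roots.

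For the signal bound, every probing action lies in $\Span(B_\perp)$. The $k(d-r)$ cycling actions $\lambda_0 [B_\perp]_i$ contribute exactly $k\lambda_0^2\|B_\perp^\top \theta\|^2$ to $\|\mu\|^2$, while the remaining $\bar n$ actions $\lambda_0 [B_\perp Q]_j$ contribute $\lambda_0^2 \|Q^\top B_\perp^\top \theta\|^2$. Setting $v := B_\perp^\top \theta$, Johnson--Lindenstrauss concentration for the uniform random projection $P = QQ^\top$ onto an $\bar n$-dimensional subspace of $\R^{d-r}$ gives $\|Q^\top v\|^2 \ge (1-\epsilon)\bar n/(d-r)\cdot \|v\|^2$ with probability $1-2\exp(-c\bar n\epsilon^2)$, so altogether $\|\mu\|^2 \ge (1-\epsilon) n_{\rm det}\lambda_0^2 \|v\|^2/(d-r)$. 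The angle hypothesis enters through $\|v\|$: since $\theta = \bar B\alpha$ with $\bar B$ orthonormal, $\|\alpha\| = \|\theta\|\ge \theta_{\min}$; and the singular values of $B_\perp^\top \bar B$ are precisely $\sin\bm{\theta}_i(B,\bar B)$ for $i=1,\ldots,r$, so $\|v\|\ge \sigma_r(B_\perp^\top \bar B)\|\alpha\| = \kappa_1 \theta_{\min}$. Plugging in the chosen $n_{\rm det}$ yields $\|\mu\|^2 \ge 9(1-\epsilon)\log(2S^2 N)$.

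For the noise contributions, the squared norm $\|\eta\|_2^2$ is sub-exponential (the $\eta_i$ are $1$-sub-Gaussian), so Bernstein's inequality gives $|\|\eta\|_2^2 - n_{\rm det}|\le c(\sqrt{n_{\rm det}\log(S^2 N)}+\log(S^2 N))$ with probability at least $1-1/(3S^2 N)$. Conditionally on $\mu$, the cross term $\mu^\top \eta$ is sub-Gaussian with variance proxy $\|\mu\|^2$, so $|\mu^\top \eta|\le \|\mu\|\sqrt{2\log(S^2 N)}$ with comparable probability. Combining these via a union bound with the JL event, and using the identity $(1+\xi_{\rm det})^2 = 1+2\xi_{\rm det}+\xi_{\rm det}^2$, the required inequality $\|Y_{\rm det}\|_2/\sqrt{n_{\rm det}}>1+\xi_{\rm det}$ reduces to $\|\mu\|^2$ dominating $2 n_{\rm det}\xi_{\rm det} + n_{\rm det}\xi_{\rm det}^2$ plus the noise slack, which the calibrated choices $\xi_{\rm det}^2 = \log(2S^2 N)/(4 n_{\rm det})$ and the signal bound just derived are designed to satisfy.

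The main obstacle is the delicate constant-balancing: the factor $9$ in $n_{\rm det}$ and the factor $4$ in $\xi_{\rm det}^2$ must be chosen precisely so that, after carrying through the JL shrinkage $(1-\epsilon)$ and the additive noise fluctuations of magnitude $\sqrt{n_{\rm det}\log(S^2N)}$, the square-rooted signal still exceeds the threshold by a margin large enough for the final union bound. A second subtlety is pushing the JL concentration through at probability $1-O(1/(S^2 N))$, which requires $\bar n\epsilon^2 \gtrsim \log(S^2 N)$; for small $\bar n$ one falls back on the exact coverage provided by the $k$ cycles through the full basis $\{\lambda_0 [B_\perp]_i\}_{i=1}^{d-r}$, so that the deterministic contribution $k\lambda_0^2\|B_\perp^\top \theta\|^2$ dominates the random one.
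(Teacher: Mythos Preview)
Your proposal is correct and contains all the essential ingredients, but the technical route differs from the paper's. You expand the square $\|Y_{\rm det}\|_2^2 = \|\mu\|^2 + 2\mu^\top\eta + \|\eta\|^2$ and control the cross term $\mu^\top\eta$ by a sub-Gaussian tail bound and $\|\eta\|^2$ by Bernstein, then reassemble. The paper instead works directly with norms: writing $Y_{\rm det}=\lambda_0 G^\top B_\perp^\top\theta+\eta$, it uses the reverse triangle inequality to reduce the event $\{Y_{\rm det}\in\mathcal C_{\rm det}\}$ to a large-deviation event for $\big|\|\eta\|/\sqrt{n_{\rm det}}-1\big|$, and then invokes a single norm-concentration lemma for sub-Gaussian vectors (their Lemma~\ref{concentration}). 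The signal lower bound is obtained exactly as you describe, via $\|B_\perp^\top\theta\|\ge \sigma_r(B_\perp^\top\bar B)\|\alpha\|=\kappa_1\theta_{\min}$ together with the random-projection bound $\|Q^\top B_\perp^\top\theta\|\ge\tfrac12\sqrt{\bar n/(d-r)}\,\|B_\perp^\top\theta\|$.

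What each approach buys: the paper's reduction avoids the cross-term analysis entirely and yields a shorter argument, at the cost of relying on the somewhat black-box norm-concentration inequality. Your squared-norm decomposition is more elementary and makes the constant bookkeeping fully explicit, which is why you (correctly) flag the balancing of the factors $9$ and $4$ as the main obstacle; in the paper's version these constants enter more opaquely through the single tail bound. Your observation about small $\bar n$ is also well taken: the paper ultimately discards the factor $\sqrt{1+3k(d-r)/n_{\rm det}}$ and relies only on the deterministic contribution from the $k$ full cycles through $B_\perp$, which is guaranteed to be large because the chosen $n_{\rm det}$ forces $k\ge 1$.
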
 

Note that the distance of the two subspaces $B$ and $\bar B$ is measured by the smallest angle $\sin \bm{\theta}_r (B, \bar B)$. From Lemma~\ref{lemma:RepCD}, it can be seen that fewer probing actions (smaller $n_{\rm det}$) are needed to detect a change between two representations with a larger distance.
Notice that, in Lemma~\ref{lemma:RepCD}, we have used the oracle $B$ in RepCD to detect outliers. However, the agent usually has just access to an estimate $\hat B$. The following lemma states that if $\hat B$ is sufficiently accurate, an outlier can still be detected with high probability. 


\begin{lemma}[Outlier detection: estimated representation]\label{lemma:RepCD:estimate}
	Consider the representation change in Lemma~\ref{lemma:RepCD}.  Assume that $\hat B$, which satisfies $\|\hat B^\top B_\perp\|_F\le \varepsilon \le \frac{1}{3} \kappa_1$, is known.  Let $n_{\rm det}$ and $\xi_{\rm det}$ be as in Eq.~\eqref{det_selection}. Then, the task $\theta$ can be detected as an outlier to $B$ by RepCD($\hat B$) in Algorithm~\ref{alg:RepCD} with probability at least
	$
	1-O(\frac{1}{S^2N}).
	$
	
\end{lemma}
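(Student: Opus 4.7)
The plan is to reduce to the oracle analysis of Lemma~\ref{lemma:RepCD} by quantifying how the inaccuracy of $\hat B$ degrades the signal that RepCD observes for an outlier. Since the probing actions in $\CA_{\rm det}$ are now generated from $\hat B_\perp$ instead of $B_\perp$, the rewards for a task $\theta=\bar B\alpha$ take the form $y_t=\lambda_0 v_t^\top \bar B\alpha +\eta_t$ with $v_t$ being either a column of $\hat B_\perp$ or of $\hat B_\perp Q$. The key geometric quantity to lower bound is therefore $\|\hat B_\perp^\top\bar B\alpha\|$, and everything else will piggy-back on the concentration bookkeeping already developed for the oracle case.

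My first step would be to translate $\|\hat B^\top B_\perp\|_F\le\varepsilon$ into the companion bound $\|\hat B_\perp^\top B\|_F\le\varepsilon$ (the standard orthonormal-complement identity) and their operator-norm versions. Using the orthogonal decomposition $\|v\|^2=\|\hat B^\top v\|^2+\|\hat B_\perp^\top v\|^2$ with $v=B_\perp w$, one obtains $\sigma_{\min}(\hat B_\perp^\top B_\perp)\ge\sqrt{1-\varepsilon^2}$. Splitting $\bar B\alpha=B B^\top\bar B\alpha+B_\perp B_\perp^\top\bar B\alpha$ and using $\sin\bm{\theta}_r(B,\bar B)=\kappa_1$, which gives $\|B_\perp^\top\bar B\alpha\|\ge\kappa_1\|\alpha\|$ and $\|B^\top\bar B\alpha\|\le\sqrt{1-\kappa_1^2}\|\alpha\|$, we arrive at
\begin{align*}
\|\hat B_\perp^\top\bar B\alpha\|
&\ge \|\hat B_\perp^\top B_\perp B_\perp^\top\bar B\alpha\|-\|\hat B_\perp^\top BB^\top\bar B\alpha\|\\
&\ge \Bigl(\sqrt{1-\varepsilon^2}\,\kappa_1-\varepsilon\sqrt{1-\kappa_1^2}\Bigr)\|\alpha\|\;\ge\; c_0\,\kappa_1\,\|\alpha\|,
\end{align*}
where the hypothesis $\varepsilon\le\kappa_1/3$ yields a universal constant $c_0\ge 2/3$ (a short monotonicity check on $\sqrt{1-\kappa_1^2/9}-\tfrac{1}{3}\sqrt{1-\kappa_1^2}$ confirms this lower bound). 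Equivalently, this asserts $\sin\bm{\theta}_r(\hat B,\bar B)\ge c_0\kappa_1$, i.e., the outlier direction remains well separated from $\Span(\hat B)$.

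With this perturbed signal bound in hand, the rest of the argument is essentially a replay of the proof of Lemma~\ref{lemma:RepCD} with $\kappa_1$ replaced by $c_0\kappa_1$. Both the deterministic first $k(d-r)$ probing actions and the random-projection tail contribute to $\|Y_{\rm det}\|^2$ an expected signal of order $\tfrac{n_{\rm det}}{d-r}\lambda_0^2 c_0^2\kappa_1^2\theta_{\min}^2$, invoking the same Johnson--Lindenstrauss-type concentration for $Q$ as in the oracle analysis. The factor $9$ in $n_{\rm det}=\lceil 9(d-r)\log(2S^2N)/(\theta_{\min}^2\lambda_0^2\kappa_1^2)\rceil$ was chosen exactly so that, even after the $c_0^2\ge 4/9$ loss, this signal still dominates the squared detection threshold $n_{\rm det}\xi_{\rm det}^2=\log(2S^2N)/4$ by a fixed constant factor. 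Sub-Gaussian concentration of the noise $\eta_t$, combined with a union bound over the $n_{\rm det}$ probing rounds and over all $S$ tasks and $N$ rounds, then pushes $Y_{\rm det}$ outside $\mathcal C_{\rm det}$ with probability at least $1-O(1/(S^2N))$, as claimed.

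The main obstacle to watch out for is the leakage term $\hat B_\perp^\top BB^\top\bar B\alpha$: it has norm up to $\varepsilon\sqrt{1-\kappa_1^2}$ and could, a priori, destructively cancel a nontrivial portion of the useful component $\hat B_\perp^\top B_\perp B_\perp^\top\bar B\alpha$ of size $\kappa_1$. The hypothesis $\varepsilon\le\kappa_1/3$ is precisely what guarantees that this cancellation cannot swamp the signal, so that the net contribution stays proportional to $\kappa_1$; relaxing it would allow an outlier whose $\bar B$-image lies near $\Span(\hat B)$ to mimic an in-representation task, and the detection rule would fail to trigger reliably. Once this separation is secured, the concentration calculations are verbatim those of Lemma~\ref{lemma:RepCD}, only with the harmless constants $c_0$ and $c_0^2$ inserted.
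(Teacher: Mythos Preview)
Your proposal is correct and follows the same strategy as the paper: establish a lower bound on $\|\hat B_\perp^\top\theta\|$ proportional to $\kappa_1\theta_{\min}$ under the hypothesis $\varepsilon\le\kappa_1/3$, and then rerun the oracle concentration argument of Lemma~\ref{lemma:RepCD} with this degraded signal. The paper's proof is a one-line reduction asserting $\|\hat B_\perp^\top\theta\|\ge(\kappa_1-\varepsilon)\theta_{\min}$ and then pointing to the oracle proof; your explicit decomposition $\bar B\alpha=BB^\top\bar B\alpha+B_\perp B_\perp^\top\bar B\alpha$ and the resulting bound $(\sqrt{1-\varepsilon^2}\,\kappa_1-\varepsilon\sqrt{1-\kappa_1^2})\|\alpha\|$ is simply a more careful derivation of the same inequality, yielding the same constant $c_0\ge 2/3$ that makes the choice of $n_{\rm det}$ with the factor $9$ go through.
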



\section{The Main Algorithm: CD-RepL}\label{sec:main_sec}
In this section, we provide the main results in this paper.
\subsection{CD-RepL}

We present the main algorithm, i.e., the change-detection representation learning algorithm (CD-RepL). CD-RepL uses the strategies that we have presented in the previous two sections to perform representation learning and to adapt to changing environments. 

CD-RepL proceeds as follows. If a new environment  is detected by RepCD (the first environment is also regarded as a new one), the agent first performs \textit{initial representation exploration}. In this period, $bL$ sequential tasks are played and an initial representation estimate $\hat B$ is constructed. Then, using this $\hat B$, the agent starts to test every task to infer whether there is an environment change using RepCD. If there is no representation change, the agent plays the sequential bandits using SeqRepL in Algorithm~\ref{alg:SeqRepL}. Note that SeqRepL starts from the $(b+1)$th cycle instead of the first one. Meanwhile, $\hat B$ is constantly updated. Once detecting a new environment, the agent restarts the above processes. 

Notice that the initial representation exploration plays an important role in CD-RepL. It provides the agent with a rough but acceptable estimate of the underlying representation such that the agent can avoid false detection with high probability (see Lemma~\ref{lemma:fasle-detection}). Moreover, after the initial exploration SeqRepL does not need to start from the first cycle because of the initial estimate $\hat B$. By starting from $(b+1)$th cycle, it avoids the unnecessary exploration phases in the first $b$ cycles. Let us explain how to select $b$. 

First, we choose the number of probing actions $n_{\rm det}$ and the detection threshold $\xi_{\rm det}$ since the choice of $b$ depends on them. In the previous section, they are chosen in the case of two representations. For the case of $m$ representations, we use the same idea. Let $\kappa:=\min\{\sin \bm{\theta}_r (B_i, B_{i+1}),i=1,\dots,m-1\}$. Then, we let 
\begin{align}\label{det_selection:m}
	n_{\rm det} = {\Bigl \lceil\frac{9(d-r)\log(2S^2 N)}{ \theta_{\min}^2 \lambda_0^2 \kappa^2}\Bigr \rceil},  {\xi_{\rm det} = 2\sqrt{\frac{\log (2S^2 N)}{n_{\rm det}}}}.
\end{align} 
Subsequently, we let 
\begin{align}\label{value:b}
	b = {\Bigl \lceil \frac{9 d r  \theta_{\max}^2 n_{\rm det}}{4 \nu^2  \sqrt{\ell N} (d-r) \log(2 S^2 N)} \Bigr \rceil}. 
\end{align}

The expressions in Eqs.~\eqref{det_selection:m} and \eqref{value:b} are guided by the following underlying ideas: 

(1) Fewer probing actions ($n_{\rm det}$) and a larger detection threshold ($\xi_{\rm det}$) are sufficient to detect representation changes if the distance between representations are larger.

(2) The choice of $b$ needs to balance between the need to explore more tasks such that the initial estimate of $\hat B$ is sufficiently accurate to avoid false detection and the incentive to explore fewer tasks to incur less regret.

\begin{algorithm}[t]
	\caption{CD-RepL}
	\label{alg:main}
	\begin{algorithmic}[1]
		\footnotesize
		\State \textbf{Input:} $b$, $L$ \hspace{5pt} \textbf{Initialize:} $\hat B=I_d$, $\hat P=\bm{0}$,  the new Rep indicator $\mathbb I_{\rm det}=1$
		\If{$\mathbb I_{\rm NewRep}=1$ (i.e., a new environment detected)}
		\State reset $\hat P=\bm{0}$;
		\State play $bL$ tasks using RepE with $N_1=dr\sqrt{\frac{N}{\ell}}$, $\hat P= \hat P+\hat \theta_i\theta_i^\top$;
		\State $\hat B \leftarrow$ top $r$	singular value decomposition of $\hat W=\frac{1}{bL}\hat P$;
		\State set $n_{\rm st}=b+1$.
		\Else 
		\State invoke RepCD, update $\mathbb I_{\rm det}$
		\EndIf 
		\If{$\mathbb I_{\rm det}=0$}
		\State invoke SeqRepL that starts from the $n_{\rm st}$th cycle.
		\EndIf	
	\end{algorithmic}
\end{algorithm}

The following theorem provides an upper bound for CD-RepL, which also justifies our choice of $n_{\rm det}$, $\xi_{\rm det}$, and $b$.  

\begin{theorem}[Upper bound of CD-RepL]\label{upp-bound:CD-RepL}
	Let the agent play the sequential bandits in $\CS$ using the CD-RepL in Algorithm~\ref{alg:main}. Let $L=\Theta(\ell)$,  $n_{\rm det}$ and $\xi_{\rm det}$ be as in Eq.~\eqref{det_selection:m}, and $b$ be as in Eq.~\eqref{value:b}. Then, the regret of the CD-RepL satisfies
	\begin{align}\label{Upper:CD-RepL}
		\BE[R_{T}] = \tilde{O} \Big( &{\underbrace{\sum_{i=1}^{m} \big( {d  r \sqrt{\tau_i N}}+ { \frac{d  r}{\lambda_0^2 \nu^2} \sqrt{\tau_i N}}  +\frac{d \ell}{r} \sqrt{\tau_iN } \big) + S  r \sqrt{N} }_{(a)}} \nonumber\\
		&\hspace{1.8cm}{+\underbrace{S n_{\rm det}+ m b dr\sqrt{N\ell}}_{(b)}  + \underbrace{2m}_{(c)}} \Big).
	\end{align}
\end{theorem}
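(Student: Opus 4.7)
\begin{pfof}{Theorem~\ref{upp-bound:CD-RepL} (plan)}
The plan is to decompose $\BE[R_T]$ environment-by-environment. Within each $\CE_k$, CD-RepL spends $bL$ tasks on an initial RepE phase, then runs SeqRepL starting from its $(b+1)$st cycle while invoking RepCD on every subsequent task. Accordingly I would split the regret into three sources: (i) the within-environment learning and transfer cost, (ii) the change-detection and initial-exploration overhead, and (iii) the residual cost incurred on the low-probability event that a change is missed or a false alarm is raised.

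First I would check that the choice of $b$ in Eq.~\eqref{value:b} makes the initial representation estimate accurate enough to feed into the detection guarantee. Applying Theorem~\ref{dist:B} to the initial phase (with $N_1 = dr\sqrt{N/\ell}$ and $n = b$) gives
\begin{align*}
\|\hat B^\top B_\perp\|_F \;\le\; \tilde O\Bigl(\tfrac{dr}{\lambda_0 \nu}\sqrt{\tfrac{1}{bL N_1}}\Bigr),
\end{align*}
and substituting the stated value of $b$ shows the right-hand side is at most $\tfrac{1}{3}\kappa$ with probability $1-\tilde O(1/(bN_1))$. With this accuracy Lemma~\ref{lemma:RepCD:estimate} applies uniformly to every task drawn from the next environment, so the first such task is flagged as an outlier with probability at least $1-O(1/(S^2N))$; a union bound over the $m-1$ change-points keeps every change detected on its first new task with probability $1-O(m/(S^2N))$. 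I would invoke the false-alarm lemma (Lemma~\ref{lemma:fasle-detection}, referenced after Algorithm~\ref{alg:main}) and a union bound over the at most $S$ within-environment probes to rule out spurious detections with probability $1-O(1/(SN))$.

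Condition on the good event that every change is caught immediately and no false alarm occurs. Then within environment $\CE_k$ the agent plays $bL$ initial RepE tasks (regret $O(bL\cdot N_1) = O(b\,dr\sqrt{N\ell})$ using $L=\Theta(\ell)$), followed by $\tau_k-bL$ tasks under SeqRepL started at cycle $b+1$. The analysis of Theorem~\ref{regret:seq:single_B} applies verbatim to the shifted SeqRepL (the omitted $b$ initial cycles are exactly what the initial phase replaces), yielding
\begin{align*}
\tilde O\Bigl(dr\sqrt{\tau_k N} + \tfrac{dr}{\lambda_0^2\nu^2}\sqrt{\tau_k N} + \tfrac{d\ell}{r}\sqrt{\tau_k N} + \tau_k r\sqrt{N}\Bigr).
\end{align*}
Summing over $k=1,\dots,m$ recovers term (a), while summing the $m$ initial phases gives $mb\,dr\sqrt{N\ell}$ and accounting for the $n_{\rm det}$ probing actions played on each of the $S$ tasks gives $S\,n_{\rm det}$; these together form term (b).

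Finally the low-probability complement contributes the $2m$ term (c): on the bad event the instantaneous regret is bounded by a constant times $\theta_{\max}\sqrt{\lambda_{\max}(M)}$, so multiplying by the $O(SN)$ horizon and the $O(m/(SN))$ failure probability yields an $O(m)$ expected cost, which I would bound crudely by $2m$ after absorbing constants into the $\tilde O$. The main obstacle I anticipate is the bookkeeping in step (ii): one has to verify that starting SeqRepL at cycle $b+1$ rather than cycle $1$ does not inflate the $\sqrt{\tau_k}$ dependence of Theorem~\ref{regret:seq:single_B}, which boils down to showing that $\sum_{n=b+1}^{b+\bar L} n = O(\bar L^2 + b\bar L)$ is absorbed by the already-paid initial-exploration term together with the $\tau_k r\sqrt{N}$ piece; the choice of $b$ in Eq.~\eqref{value:b} is exactly what makes this accounting balance.
\end{pfof}
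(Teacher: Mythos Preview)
Your proposal is correct and follows essentially the same approach as the paper: condition on the ``good'' event that every change is detected immediately and no false alarm occurs, bound the conditional regret using Theorem~\ref{regret:seq:single_B} on each environment plus the probing and initial-exploration overhead, and then control the bad-event contribution using Lemmas~\ref{lemma:RepCD:estimate} and~\ref{lemma:fasle-detection}. The paper's version differs only stylistically: it first treats the two-environment case explicitly by defining three separate events $D_1$ (immediate detection), $G_1$ (late or missed detection), and $F_1$ (false alarm), computes the conditional regret under each, and combines them via the law of total expectation to get the constant $2$ per switch, then generalizes to $m$ environments; you instead take a single global union bound over all $m-1$ change-points and all $S$ probes, which is a coarser but equivalent packaging of the same probabilities. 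Your remark about the cycle-shift bookkeeping (starting SeqRepL at cycle $b{+}1$) is in fact more careful than the paper, which simply invokes Theorem~\ref{regret:seq:single_B} on each subsequence without commenting on the shift.
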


The regret incurred by CD-RepL can be decomposed into three parts. Term (a) follows from Theorem~\ref{regret:seq:single_B}, which is the regret incurred by the SepRepL for each environment. Term (b) is the regret incurred by the probing actions for each task and the initial representation exploration for each environment. Term (c) is due to unsuccessful detection and false detection. {Similar to Remark~\ref{remark:Th1}, whether CD-SepL  outperforms the existing algorithms that treat bandits independently depends on the relationship between the parameters $d, r,\ell,\lambda_0, \nu, \kappa,\tau,S$ and $m$. To make the dependence more clear, we provide the following corollary. } 

\begin{corollary}
	Assume that $\ell=\Theta(r)$, $1/(\lambda_0^2 \nu^2)=\Theta(1)$, and  $\kappa$ satisfies
	\begin{align}\label{kappa_bound}
		\kappa \ge \max \big\{\min\{ p_1,p_2 \},\min \{p_3,p_4 \} \big\},
	\end{align}
	where
	\begin{align*}
		&p_1= \frac{3S^{\frac{1}{4}} (d-r)^{\frac{1}{2}} \sqrt{\log(2S^2N)}}{ 2d^{\frac{1}{2}} r^{\frac{1}{2}}  N^{\frac{1}{4}} \theta_{\min} \lambda_0},\\
		&p_2=\frac{3(d-r)^{\frac{1}{2}} \sqrt{\log(2S^2N)} }{2 r^{\frac{1}{2}} N^{\frac{1}{4}} \theta_{\min} \lambda_0}\\
		&p_3= \frac{3(d-r)^{\frac{1}{2}}  q^{\frac{1}{2}} m^{\frac{1}{2}} }  {4r^{\frac{1}{2}} N^{\frac{1}{4}} S^{\frac{1}{4}} \nu \theta_{\min} \lambda_0}, \hspace{12pt} p_4=\frac{3 d^{\frac{1}{2}} (d-r)^{\frac{1}{2}} q^{\frac{1}{2}} m^{\frac{1}{2}}}  {4r^{\frac{1}{2}} N^{\frac{1}{4}} S^{\frac{1}{2}} \nu \theta_{\min} \lambda_0}.
	\end{align*}
	Then, the regret of CD-RepL for the sequential tasks in $\CS$ satisfies 
	\begin{align}\label{Cora:CD-RepL}
		\BE [R_{SN}] = \tilde O \Big(\sum \nolimits _{i=1}^{m} {d r \sqrt{ \tau_i N}}+ S r \sqrt{N}  \Big).
	\end{align}
\end{corollary}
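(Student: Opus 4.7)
The plan is to start directly from Theorem~\ref{upp-bound:CD-RepL} and verify that, under the three hypotheses on $\ell$, $1/(\lambda_0^2\nu^2)$, and $\kappa$, every term of the bound \eqref{Upper:CD-RepL} that is not already of the form $dr\sqrt{\tau_i N}$ or $Sr\sqrt{N}$ can be absorbed. I would decompose the argument exactly along the three pieces (a), (b), (c) of the theorem.

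First I would simplify part (a). Since $\ell=\Theta(r)$ and $1/(\lambda_0^2\nu^2)=\Theta(1)$, the term $\frac{dr}{\lambda_0^2\nu^2}\sqrt{\tau_i N}$ collapses to $\Theta(dr\sqrt{\tau_i N})$ and $\frac{d\ell}{r}\sqrt{\tau_i N}=\Theta(d\sqrt{\tau_i N})$, which is already dominated by $dr\sqrt{\tau_i N}$. Hence part (a) is of the desired order. Part (c), which is only $2m\le 2S$, is trivially absorbed into $Sr\sqrt{N}$ up to logarithmic factors and is therefore not the binding constraint.

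The substantive work is in part (b), namely $Sn_{\rm det}+m\,b\,dr\sqrt{N\ell}$. Substituting the explicit choice \eqref{det_selection:m} gives $Sn_{\rm det}=\tilde O\bigl(S(d-r)/(\theta_{\min}^2\lambda_0^2\kappa^2)\bigr)$. A direct algebraic comparison shows that $\kappa\ge p_1$ forces $Sn_{\rm det}=\tilde O(dr\sqrt{SN})\le \tilde O\bigl(dr\sum_i\sqrt{\tau_i N}\bigr)$, while $\kappa\ge p_2$ forces $Sn_{\rm det}=\tilde O(Sr\sqrt{N})$; hence $\kappa\ge\min\{p_1,p_2\}$ is exactly what is needed to absorb this piece into whichever of the two target terms is cheaper. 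For the other piece, I would plug \eqref{det_selection:m} into \eqref{value:b} to get $b=\tilde O\bigl(dr\theta_{\max}^2/(\nu^2\sqrt{\ell N}\,\theta_{\min}^2\lambda_0^2\kappa^2)\bigr)$, observe that the factors $\sqrt{N\ell}$ and $\sqrt{\ell N}$ cancel, and conclude $m\,b\,dr\sqrt{N\ell}=\tilde O\bigl(md^2r^2\theta_{\max}^2/(\nu^2\theta_{\min}^2\lambda_0^2\kappa^2)\bigr)$. The same comparison procedure then shows that $\kappa\ge p_3$ controls this term by $dr\sum_i\sqrt{\tau_i N}$ and $\kappa\ge p_4$ controls it by $Sr\sqrt{N}$, so $\kappa\ge\min\{p_3,p_4\}$ is sufficient. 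Combining with the first pair gives the hypothesis $\kappa\ge\max\{\min\{p_1,p_2\},\min\{p_3,p_4\}\}$ stated in \eqref{kappa_bound}.

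The real difficulty is not analytical but book-keeping: one has to be careful that the $p_i$'s are designed so that the $1/\kappa^2$ appearing in $n_{\rm det}$ or $b$ is matched against the correct target square root. The $p_1,p_3$ thresholds carry an $S^{1/4}$ rather than an environment-dependent quantity because in the worst case over partitions $\{\tau_i\}$ one only has $\sum_i\sqrt{\tau_i N}\ge\sqrt{SN}$; using this lower bound when absorbing into $dr\sum_i\sqrt{\tau_i N}$ is what produces those exponents. Taking the minimum inside each pair then lets the user pick the weaker requirement, and taking the maximum across the two pairs guarantees that the two distinct sub-terms of part (b) are simultaneously dominated, yielding \eqref{Cora:CD-RepL}.
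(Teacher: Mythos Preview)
Your proposal is correct and follows the same route as the paper: the paper's justification is literally the one-sentence remark that ``Terms (b) and (c) in Eq.~\eqref{Upper:CD-RepL} are dominated by Term (a) if $\kappa$ is lower bounded as in Eq.~\eqref{kappa_bound},'' and you have simply written out the verification of that sentence. Your identification of $\min\{p_1,p_2\}$ with the $Sn_{\rm det}$ piece and $\min\{p_3,p_4\}$ with the $m\,b\,dr\sqrt{N\ell}$ piece, together with the use of $\sum_i\sqrt{\tau_i N}\ge\sqrt{SN}$ to justify the $S^{1/4}$ exponents in $p_1,p_3$, is exactly the intended book-keeping.
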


The reason that the upper bound in Theorem~\ref{upp-bound:CD-RepL} reduces to the one in Eq.~\eqref{Cora:CD-RepL} is because Terms (b) and (c) in Eq.~\eqref{Upper:CD-RepL} are dominated by Term (a) if $\kappa$ is lower bounded as in Eq.~\eqref{kappa_bound}. Further, observing that $\sum \nolimits _{i=1}^{m} {d r\sqrt{ \tau_i N}} \le dr \sqrt{ S m N}$, the upper bound of the regret in Eq.~\eqref{Cora:CD-RepL} can be rewritten into
$
\tilde O \Big(dr \sqrt{m S N} + S r \sqrt{N}  \Big).
$
Recall that algorithms like UCB, ETC, and PEGE (e.g., see \cite{Dani-Hayes-Kakade:2008,RP-TJN:2010,LY-WY-CX-ZY:2021}) that play the sequential bandits independently have a regret bound $\Theta(Sd\sqrt{N})$. Under the assumption $r\ll d$, CD-RepL outperforms these algorithms considerably if $S/m > r^2 $. In other words, our algorithm is advantageous over the existing ones if: (a) the dimension of linear representation is much smaller than the task dimension; (b) the underlying representation does not change too fast. Notice that if the agent plays the $S$ tasks simultaneously, the regret upper bound can be up to $O(Sd\sqrt{N})$ even if the idea of representation learning is used. This is because the entire set of the $S$ tasks may not share a common representation, although some of its subsets do (this point will be demonstrated in Fig.~\ref{synthetic} in Section~\ref{sec:Experiments}). 

\begin{remark}
	We remark that the assumption of $\kappa$ in Eq.~\eqref{kappa_bound} is mild. The right-hand side of Eq.~\eqref{kappa_bound} becomes very small if $d\ll N$.
	
\end{remark}

\subsection{Analysis of Theorem~\ref{upp-bound:CD-RepL}}
Let us provide an instrumental result first, whose proof is in Appendix~\ref{pf:lm5}. 

\begin{lemma}[Probability of a false detection]\label{lemma:fasle-detection}
	Consider the case where there is only one environment (i.e, $m=1$) in the task sequence $\{\theta_1,\theta_2,\dots,\theta_S\}$, and the underlying representation is $B\in \R^{d\times r}$. Let the agent play this sequence of tasks using CD-RepL. Let $\hat B$ be the estimated representation after the initial sample phase of $bL$ tasks, the probability that a task $\theta$ is detected by RepCD($\hat B$) as an outlier, denoted by  $\Pr [Y_{\rm det} \notin \mathcal C_{\rm det} |\theta=B\alpha]$, is less than $O(\frac{1}{S^2N})$. 
\end{lemma}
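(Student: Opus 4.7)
The failure event $Y_{\rm det}\notin\mathcal{C}_{\rm det}$ is equivalent to $|\|Y_{\rm det}\|_2/\sqrt{n_{\rm det}} - 1|>\xi_{\rm det}$. The plan is to decompose each probing reward as $y_i=\mu_i+\eta_i$ with $\mu_i:=x_i^\top\theta$, and argue that, on the event that $\hat B$ is close to $B$, the signal $\mu_i$ is small enough that the test statistic concentrates near $1$. The initial sampling phase of CD-RepL plays $bL$ tasks via RepE, which matches the state of SeqRepL's representation-exploration phase after $b$ cycles; Theorem~\ref{dist:B} therefore yields, on a high-probability event $\mathcal{E}_B$,
\[
\varepsilon := \|\hat B^\top B_\perp\|_F \;\le\; \tilde{O}\!\left(\frac{dr}{\lambda_0\nu}\sqrt{\frac{1}{bL\,N_1}}\right).
\]
Substituting $L=\Theta(\ell)$, $N_1=dr\sqrt{N/\ell}$, and the prescribed $b$ from \eqref{value:b} (which is reverse-engineered for precisely this cancellation) gives, after direct algebra, $\varepsilon^{2}\le \tilde{O}\!\left(\tfrac{(d-r)\log(2S^{2}N)}{\lambda_{0}^{2}\theta_{\max}^{2}\,n_{\rm det}}\right)$.

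Next, bound the signal. Every probing action has the form $x_i=\lambda_0 v_i$ with $v_i\in\Span(\hat B_\perp)$ of norm at most one; writing $v_i=\hat B_\perp u_i$ and using $\theta=B\alpha$ with $\|\alpha\|\le\theta_{\max}$,
\[
|\mu_i| \;=\; \lambda_0\,|u_i^\top \hat B_\perp^\top B\alpha| \;\le\; \lambda_0\,\theta_{\max}\,\|\hat B_\perp^\top B\|_F \;=\; \lambda_0\,\theta_{\max}\,\varepsilon,
\]
using the standard identity $\|\hat B_\perp^\top B\|_F=\|\hat B^\top B_\perp\|_F$ (both equal the Frobenius sine-distance). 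Hence on $\mathcal{E}_B$, $\sum_i\mu_i^2 \le n_{\rm det}\lambda_0^2\theta_{\max}^2\varepsilon^2 = \tilde{O}\!\bigl((d-r)\log(2S^2N)\bigr)$. Expanding
\[
\|Y_{\rm det}\|_2^2 - n_{\rm det} \;=\; \sum_i\mu_i^2 + 2\sum_i\mu_i\eta_i + \Bigl(\sum_i\eta_i^2 - n_{\rm det}\Bigr),
\]
I then apply Hoeffding (conditional on $\hat B$) to the $\sqrt{\sum_i\mu_i^2}$-sub-Gaussian cross term and Bernstein to the sub-exponential variable $\sum_i(\eta_i^2-\mathbb{E}[\eta_i^2])$, each at confidence $1-O(1/(S^2N))$. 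Combining the three contributions yields $|\|Y_{\rm det}\|_2^2/n_{\rm det} - 1|\le O(\xi_{\rm det})$, and linearising $\sqrt{1+z}\approx 1+z/2$ for small $z$ then gives $|\|Y_{\rm det}\|_2/\sqrt{n_{\rm det}}-1|\le\xi_{\rm det}$, the complement of the failure event.

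The main obstacle is constant-tracking: \eqref{value:b} is designed precisely so that $\sum_i\mu_i^2$ is dominated by the noise fluctuation $O(\sqrt{n_{\rm det}\log(S^2N)})$; any smaller choice of $b$ would leave the signal competitive with the noise and blow up the false-detection probability. A union bound over $\mathcal{E}_B$, the Hoeffding event, and the Bernstein event delivers the stated $\Pr[Y_{\rm det}\notin\mathcal{C}_{\rm det}\mid \theta=B\alpha]\le O(1/(S^2N))$.
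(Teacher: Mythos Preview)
Your overall plan is right and parallels the paper: bound $\varepsilon=\|\hat B^\top B_\perp\|_F$ via Theorem~\ref{dist:B} after $bL$ exploration tasks, show the ``signal'' part of $Y_{\rm det}$ is small because the probing directions lie in $\Span(\hat B_\perp)$, and then let noise concentration finish. The paper, however, does the last step differently: it applies the triangle inequality $\tfrac{1}{\sqrt{n_{\rm det}}}\|Y_{\rm det}\|\le \tfrac{1}{\sqrt{n_{\rm det}}}\|\text{signal}\|+\tfrac{1}{\sqrt{n_{\rm det}}}\|\eta\|$ and invokes the concentration-of-norm lemma (their Lemma~\ref{concentration}) directly on $\|\eta\|$, whereas you expand $\|Y_{\rm det}\|_2^2$ and treat the cross term and $\sum_i(\eta_i^2-1)$ by Hoeffding and Bernstein. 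Your route is more elementary and would work, but only after fixing the gap below.

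The gap is in your signal bound. Bounding each coordinate by $|\mu_i|\le \lambda_0\theta_{\max}\varepsilon$ and summing gives $\sum_i\mu_i^2\le n_{\rm det}\lambda_0^2\theta_{\max}^2\varepsilon^2$, which with your (correct) $\varepsilon^2$ estimate is $\tilde O\bigl((d-r)\log(2S^2N)\bigr)$. But the budget you must beat is $n_{\rm det}\xi_{\rm det}=2\sqrt{n_{\rm det}\log(2S^2N)}=\Theta\bigl(\sqrt{d-r}\,\log(2S^2N)/\kappa\bigr)$, so your signal term is \emph{not} dominated unless $(d-r)\kappa^2=O(1)$, which is not assumed. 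The paper avoids this by exploiting the structure of the probing actions: the first $k(d-r)$ actions are $k$ repetitions of the orthonormal columns of $\hat B_\perp$, so each block contributes $\lambda_0^2\|\hat B_\perp^\top\theta\|^2\le \lambda_0^2\theta_{\max}^2\varepsilon^2$ (not $(d-r)$ times that), and the random-projection lemma (their Lemma~\ref{lemma:randn-prj-bandit}) controls the remaining $\bar n$ actions. This yields $\sum_i\mu_i^2\lesssim \tfrac{n_{\rm det}}{d-r}\lambda_0^2\theta_{\max}^2\varepsilon^2=\tilde O\bigl(\log(2S^2N)\bigr)$, i.e.\ $\tfrac{1}{\sqrt{n_{\rm det}}}\|\text{signal}\|\le s$ with $s\asymp \xi_{\rm det}$, exactly what \eqref{value:b} is calibrated to deliver. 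If you replace your coordinatewise bound by this block/structural bound, the rest of your Hoeffding--Bernstein argument goes through.
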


\begin{pfof}{Theorem~\ref{upp-bound:CD-RepL}}	
	Recall that $\tau_i$ sequential tasks are taken from each set $\CS_i$. Therefore, the representations change after the $(\sum_{i=1}^{p-1}\tau_i)$-th task is played, where $p=2,\dots,m$. For the simplicity of nation, denote $v_i=(\sum_{j=1}^{i}\tau_j+1), i=1,\dots,m-1$ as the instants when the environment switches happen. Let $\mu_i$ be the detection time of the $i$th switch (i.e., the $\mu_i$th task is detected as an outlier to the $i$th representation).  Therefore, the event $D_i=\{\mu_i=v_i\}$ is a good event, which describes the situations that the $i$th representation switch is detected immediately after it happens. The event $G_i=\{\mu_i>v_i\}$ denotes a late or an unsuccessful detection. The event $F_i=\{\mu_i<v_i\}$ denotes a false detection, which describes the situation that an alarm is triggered when there is no representation switch.
	
	First, we consider the case with two representations (i.e., 1 representation switch), and the sequential tasks $\underbrace{\{\theta_1,\dots,\theta_{\tau_1}}_{B_1},\underbrace{\theta_{\tau_1+1},\dots,\theta_{\tau_1+\tau_2}}_{B_2}\}$ are played using CD-RepL in Algorithm~\ref{alg:main}. The regret of CD-RepL given $D_1$ satisfies
	\begin{align}\label{ineq:D1}
		{\BE [R_{(\tau_1+\tau_2) N}|D_1]\lesssim} &{\underbrace{ \sum_{i=1}^{2} {d  r \sqrt{\tau_i N}}+{\frac{d  r}{\lambda_0^2 \nu^2} \sqrt{\tau_i N}}+  \frac{d \ell}{r} \sqrt{N \tau_i} + \tau_i  r \sqrt{N} }_{(a)}} \nonumber \\
		&{+\underbrace{(\tau_1+\tau_2)n_{\rm det}}_{(b)} +\underbrace{2 b dr\sqrt{N\ell} }_{(c)}}.
	\end{align}
	Note that term (a) is the regret incurred by SeqRepL in individual subsequence, which follows from Theorem~\ref{regret:seq:single_B}; term (b) is incurred by the probing actions for each task; and term (c) results from the initial representation exploration. 
	
	If the event $G_1$ happens, the regret of CD-RepL satisfies
	\begin{align}\label{ineq:G1}
		{\BE [R_{(\tau_1+\tau_2) N}| G_1]\lesssim}&{\underbrace{( {d  r \sqrt{\tau_1 N}}+{\frac{d  r}{\lambda_0^2 \nu^2} \sqrt{\tau_1 N}}+  \frac{d \ell}{r} \sqrt{N \tau_1} + \tau_1  r \sqrt{N}) }_{(i)}}\nonumber\\
		&{\underbrace{(\tau_1+\tau_2)n_{\rm det} + b dr\sqrt{N \ell} }_{(ii)} + \underbrace{\tau_2 N}_{(iii)}}.
	\end{align}
	Here, the terms (i) and (ii) are the regret in the first subsequence, which follows from Eq.~\eqref{ineq:D1}. If the representation switch is detected after $v_i$ or not detected at all, the regret for the second subsequence would be bounded by $O(\tau_2 N$), which results in the term (iii). This is intuitive since there are $\tau_2$ tasks in the second subsequence and the regret of each task is bounded by $O(N)$. 
	
	If the event $F_1$ happens, the regret of CD-RepL satisfies	
	\begin{align}\label{ineq:F1}
		\BE [R_{(\tau_1+\tau_2) N}|F_1]\lesssim (\tau_1+\tau_2) N.
	\end{align}
	From Lemma~\ref{lemma:RepCD}, it can be derived that $\Pr[D_1]\ge 1-O(\frac{1}{S^2N})$. The event $G_1$ means that $D_1$ does not happen. It can be calculated that $\Pr[G_1]\le O(\frac{1}{S^2 N})$. The event $F_1$ means that there is at least $1$ false detection in the first subsequence. From Lemma~\ref{lemma:fasle-detection}, we know that the probability that a task is detected as an outlier falsely is less than $O(\frac{1}{S^2N})$. Then, the probability of $F_1$ can be calculated as
		\begin{align*}
			\Pr[F_1] \le  1- (1-O(\frac{1}{S^2 N}))^{\tau_1} \le O(\frac{\tau_1}{S^2N}). 
	\end{align*}
	
	Putting together Eqs.~\eqref{ineq:D1}--\eqref{ineq:F1} and using the law of total expectation, it can be computed that the regret of CD-RepL satisfies
	\begin{align}\label{Upper:m=2}
		{\BE [R_{(\tau_1+\tau_2) N}]}  &{\lesssim \sum_{i=1}^{2} ( {d  r \sqrt{\tau_i N}}+{\frac{d  r}{\lambda_0^2 \nu^2} \sqrt{\tau_i N}}+  \frac{d \ell}{r} \sqrt{N \tau_i} + \tau_i  r \sqrt{N} )} \nonumber\\
		&+(\tau_1+\tau_2)n_{\rm det}+ 2 b dr\sqrt{N\ell}  + 2, 
	\end{align}
	where the last term on the right-hand side follows from $\tau_2 N \cdot O(\frac{1}{S^2n})+ (\tau_1+\tau_2) N \cdot O(\frac{\tau_1}{S^2N})\le O(2)$. 	
	
	The upper bound in Eq.~\eqref{Upper:m=2} can be generalized to the case where there are $m-1$ representation switches. In this case, the upper bound of the regret of CD-RepL becomes 	
	\begin{align*}
		{\BE [R_{S N}] \lesssim}&  {\sum_{i=1}^{m} ( {d  r \sqrt{\tau_i N}}+{ \frac{d  r}{\lambda_0^2 \nu^2} \sqrt{\tau_i N}}+  \frac{d \ell}{r} \sqrt{N \tau_i} + \tau_i  r \sqrt{N} )}\\
		&{+\sum_{i=1}^{m} \tau_i n_{\rm det}+ m b dr\sqrt{N\ell}  + 2(m-1)}\\
		\le& {\sum_{i=1}^{m} \big( {d  r \sqrt{\tau_i N}}+{ \frac{d  r}{\lambda_0^2 \nu^2} \sqrt{\tau_i N}}+  \frac{d \ell}{r} \sqrt{\tau_iN } \big) + S  r \sqrt{N} }\\
		&{+S n_{\rm det}+ m b dr\sqrt{N\ell}  + 2m}.
	\end{align*}
	which completes the proof.
\end{pfof}

\section{Illustrative Examples}\label{sec:Experiments}

We perform some  experiments to validate our theoretical results and demonstrate the efficacy of our algorithm.

%

\textbf{Synthetic Data.} We first synthesize a set of data to demonstrate our algorithm. Specifically, we consider a series of $1600$ bandit tasks of dimension 20. There are four segments in this sequence, and each has 400 tasks. In each segment, there is a representation $B_i\in \R^{20\times 3}$. The parameters in Assumption~\ref{Assump:diversity} are $\ell=3$ and $\nu=0.01$. The action set is the unit ball defined by $\CA=\{x\in\R^{20}:\|x\|\le 1\}$. The noise in the reward-generating function is assumed to be Gaussian $\mathcal N(0,0.3)$. Each task is played for $2000$ rounds.

\begin{table}[t!] 
	\caption{Baselines} 
	\centering 
	\begin{tabular}{l c} 
		\hline   
		Algorithms &Description
		\\ [0.2ex]  
		\hline  \hline 
		& {\scriptsize play bandits independently} \\[-1.5ex]
		\raisebox{1ex}{Standard} & {\scriptsize (e.g., ETC \cite{AYY-AA-SC:2009},  PEGE \cite{RP-TJN:2010} and UCB \cite{Dani-Hayes-Kakade:2008,LY-WY-CX-ZY:2021})}\\
		\hline
		& {\scriptsize the single representation for all bandits is known} \\[-1.5ex]
		\raisebox{1ex}{Semi-oracle} & {\scriptsize (i.e.,  $B$ such that satisfies $\theta_i=B\alpha_i$ for all $i$ is known\tablefootnote{Such $B$ always has no smaller dimension that those of within-environment representations, and it may not exist if $\theta_i$'s span the entire $\R^d$. Note that existing algorithms (e.g., \cite{Yang-Hu-Lee:2021,HJ-CX-JC-LL-WL:2021}) that play bandits simultaneously and exploit representation learning cannot outperform the semi-oracle algorithm since they need to estimate the representation. })}\\
		\hline
		{Non-adaptive} & {\scriptsize disabled environment change detection}\\
		\hline
		{Oracle} & {\scriptsize representations and change times are known}\\
		\hline
	\end{tabular}  \label{table}
\end{table}    

\begin{figure}
	\centering		
	{\includegraphics[scale=1.5]{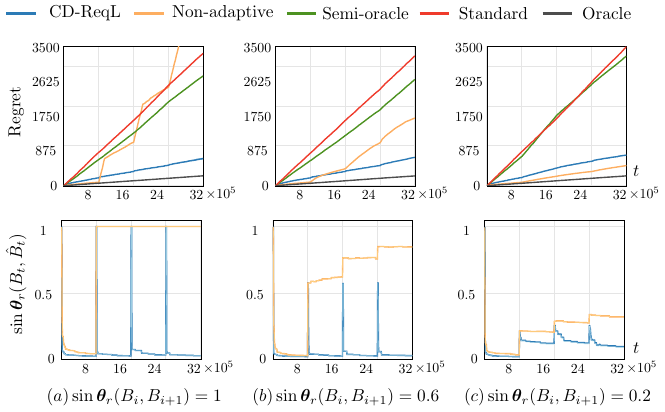}}
	\caption{Performance comparison between different algorithms using synthetic data. Upper panels: regret; lower panels: distance between $\hat B$ to the changing true representation $B$. In (a), (b), and (c), environment changes involving different representation distances are considered. Shaded areas contain 10 random realizations of trials.}
	\label{synthetic}
\end{figure}

\begin{figure}
	\centering		
	{\includegraphics[scale=1.5]{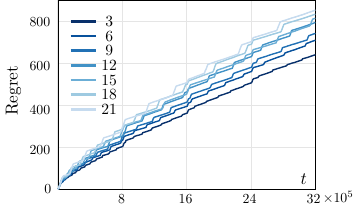}}
	\caption{Performance of CD-SepL when $L$ is set to different values. The parameters are the same as in Fig.~\ref{synthetic}~(b).}
	\label{Diff_L}
\end{figure}

We compare our algorithm (CD-RepL) with four other algorithms in Table~\ref{table}. We consider several situations where environment changes involve different representation distances (which is measured by $\sin\bm{\theta}_r(B_i,B_{i+1}),i=1,2,3$). 

The Oracle outperforms CD-RepL as expected since the latter needs to pay the cost of learning the representations and detecting the environment changes.

From the upper panels in Fig.~\ref{synthetic}, CD-RepL always outperforms Standard, consistent with our theoretical results. Compared with Semi-oracle, CD-SeqL outperforms the existing algorithms that learn the single representation shared by all the bandits. The reason is that this representation may not exist or have a high dimension even if the subsets of tasks have low-dimensional representations. 

Further, we disable the environment change detection of our algorithm (i.e., Non-adaptive) and compare it with CD-RepL. CD-RepL is particularly advantageous over Non-adaptive when representations change drastically (see Fig.~\ref{synthetic}~(a)); the advantage decreases when it comes to more subtle changes (see Fig.~\ref{synthetic}~(b)). For sufficiently small changes, Non-adaptive can even perform better (see Fig.~\ref{synthetic}~(c)). This is because the price of detecting the changes and re-learning each representation may overwhelm the potential benefits of transferring the learned representations. However, CD-RepL have a much more stable performance in all situations.  The lower panels in Fig.~\ref{synthetic} illustrate that CD-SepL can detect environment changes in different situations involving drastic or subtle representation changes, which is the key to endow CD-SepL with the adaptability. By contrast, Non-adaptive can no longer learn the true representations accurately after the first environment change. 


{ Recall that our theoretical results rely on fact that the order of $\ell$ in Assumption~\ref{Assump:diversity}  is known since the number of tasks in each representation cycle is set to be $L=\Theta(\ell)$. Yet, this assumption is not required in practice. As shown in Fig.~\ref{Diff_L}, CD-RepL has similar performance for a wide range of $L$. This implies that, even if $\ell$ is unknown, one can always choose an $L$ that is likely larger than $\ell$ without compromising the performance much compared to the case of letting $L=\ell$.}



\textbf{LastFM.} We use this dataset to demonstrate that our algorithm can be used to design an adaptive recommendation system as depicted in Fig.~\ref{conceptual}~(b). This dataset is extracted from the music streaming service Last.fm. It contains 1892 users, 17632 artists, and a listening count of user-artist pairs. We first remove the artists that have fewer than 40 listeners and the users who listened fewer than 10 artists, and obtain a matrix $M=[m_{ij}]$ of size $411\times 1565$ with each row representing an artists and each column a user.  To generate arms and users, we use the non-negative matrix factorization for $M$ and keep the first $20$ latent features. In other words, $M\approx AU$, where $A\in \R^{411\times 20}$ and $U\in \R^{20\times 1565}$ are non-negative and describe the features of the artists and users, respectively.  From $U$, we select 3 groups of users that approximately lie in distinct subspaces, consisting $11$, $6$, and $6$ users, respectively. These users form a series of bandits $\theta_1,\dots,\theta_{23}$ that have different 2-dimensional representations. We then recommend music items to these users for 200 times from the action set $\CA$ that is composed of the row vectors of $A$. The reward is generated by $y_t=x_t^\top \theta_i+\eta_t$, where $x_t \in \CA$ and $\eta_t$ is Gaussian noise $\mathcal N(0,0.2)$. In Fig.~\ref{LastFM} it can be observed that, by learning and exploiting the representation shared by users and detecting environment changes our algorithm outperforms the existing ones that treat bandits independently. 


\begin{figure}[t]
	\centering
	\includegraphics[scale=0.8]{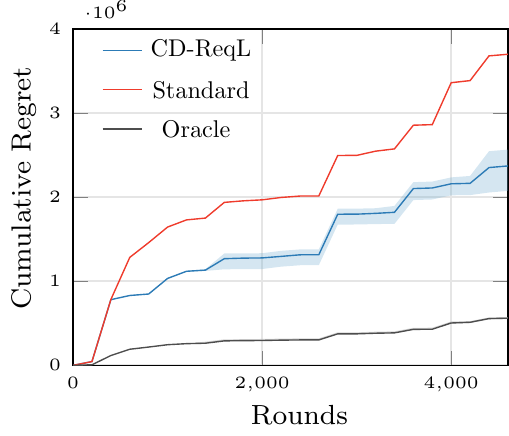}
	\caption{Performance comparison between different algorithms as recommendation systems using LastFM data. Shaded areas contain 10 random realizations of trials.}
	\label{LastFM}
\end{figure}

\textbf{Wisconsin Card Sorting Task (WCST).} WCST, see Fig.~\ref{conceptual}~(b), is typically utilized to assess human abstraction and shift of contexts \cite{GDA-BW:1948}. Participants need to match a series of stimulus cards to one of the four cards on the table based on a sorting rule. For the stimulus card in Fig.~\ref{conceptual}~(b), if the rule is color, the correct sorting action is the third card. The participants only receive feedback about whether their actions are correct. For convenience, we assume that they receive reward 1 for a correct action, and 0 otherwise. Participants do not know the current sorting rule, thus need to infer it by trial and error. The sorting rule changes every now and then, which makes the task challenging.


WCST can actually be described by a bandit problem with environment changes, where representations define the sorting rules. Specifically, we use a matrix of size $4 \times 3$, $A=[a_1,a_2,a_3]$, to describe each card. Here, the vectors $a_1,a_2$ and $a_3$ define the number, color, and shape, respectively, and they take value from the $4$-dimensional standard basis $\{e_1,e_2,e_3,e_4\}$. The matrix  $A=[e_i,e_j,e_m]$ describes a card that has the $i$th number, the $j$th color, and the $m$th shape of the 4 cards on the table. For instance, the stimulus card in Fig.~\ref{conceptual}~(b) can be described by the matrix $A=[e_2,e_3,e_1]$.  Further, we use a vector $B$ to describe the sorting rule, which takes value from the  basis $\{b_1,b_2,b_3\}$ of $\R^3$, representing the sorting rule is number, color, and shape, respectively.



\begin{figure}[t]
	\centering
	\includegraphics[scale=0.85]{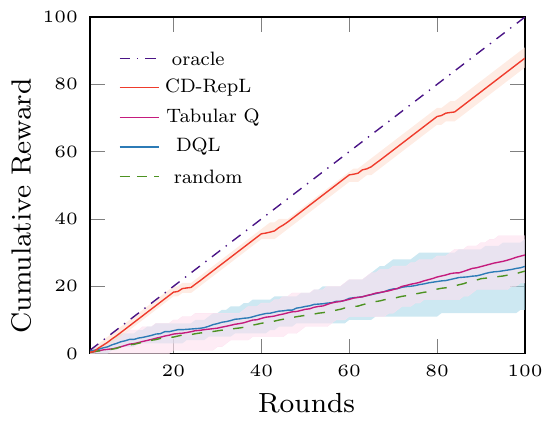}
	\caption{Performance comparison for different algorithms in WCST, where the sorting rule changes after ever 20 rounds. Shaded areas contain 10 realizations.}
	\label{WSCT:comparison}
\end{figure}

As a consequence, the reward of  WCST is generated by $y_t=x^\top_t \theta_t$ with $\theta_t=A_t B_{\sigma(t)}$, where $x_t$ is the action that takes value from the $4$-dimensional standard basis $\{e_1,e_2,e_3,e_4\}$, $A_t$ is the card at round $t$. Notice that $B_{\sigma(t)}$ that describes the sorting rule can be regarded as the time-varying representation in the reward function. 

For standard reinforcement learning algorithms, such as tabular-Q-learning and deep-Q-learning, WCST is challenging. There are in total $4^3$ possible stimulus cards (4 colors, 4 numbers, 4 shapes), and for each stimulus card, there are 4 possible categories. To find the best policy, the standard tabular-Q-learning needs many samples to construct the Q table for a single sorting rule.  It is then impossible for the standard Q learning algorithm to find the optimal policy if the sorting rule changes in a few trials (in Fig.~\ref{WSCT:comparison} it changes every 20 rounds). Being unaware of rule changes results in an even worse performance. The deep-Q-learning algorithm\footnote{Here, we formalize each input
	state by a 3-dimensional vector (shape, number, color)$^\top\in\{1,2,3,4\}^3$. The result in Fig.~\ref{WSCT:comparison} considers a three layer neural network with 3, 12, and 4 nodes in the input, hidden,
	and output layers, respectively. Deeper	or wider structures were also considered, but similar performances were obtained.} does not perform better since, similarly, it always needs a large number of samples to train the weights in the neural network to find the optimal policy. It can be seen from Fig.~\ref{WSCT:comparison} that these two algorithms perform barely better than the one that takes a random action at each round.

However, by describing the WCST as a linear bandit model, we find that the problem can reduce to learning the representation $B_{\sigma(t)}$, as the correct action can be computed as $x^*_t=A_tB_t$. The problem then reduces to learn the underlying representation $B_\sigma$, a task that is much easier than constructing the Q table or training the weights in a Deep-Q network. Remarkably, one does not even need to learn individual $\theta_t$ to construct $B_\sigma$. Instead, $B_\sigma$ can be recovered as
$
B_\sigma=(\sum\nolimits_{t=1}^{k}A_t^\top x_t x_t^\top A_t)^{-1} \sum\nolimits_{t=1}^{k}A_t^\top x_t y_t 
$
immediately after $\sum_{t=1}^{k}A_t^\top x_t x_t^\top A_t$ becomes invertible. This indicates that our idea in this paper can apply to more general situations. As shown in Fig.~\ref{WSCT:comparison}, our algorithm significantly outperforms the other two, approaching the oracle that makes correct choice at every round. This experiment suggests that the ability to learn representations and shift attention  \cite{RA-SYS-NY:2021,NM-MR-XC-PA:2018} to adapt to environment changes facilitates efficient learning.

\section{Concluding Remarks}\label{sec:conclusion}

In this paper, we exploit representation learning for decision-making in non-stationary environments using the framework of multi-task sequential linear bandits. We propose an efficient decision-making algorithm that learns and transfers representations online. Employing a representation-change-detection strategy, our algorithm also has the flexibility to adapt to new environments. We further obtain an upper bound for the algorithm, analytically showing that it significantly outperforms the existing ones that treat tasks independently. Moreover, we perform some experiments using synthetic data to demonstrate our theoretical results. Using the LastFM data, we show that our algorithm can be applied to designing adaptive recommendation systems. In the Wisconsin Card Sorting Task, experimental results show that our algorithm considerably outperforms some classic reinforcement learning algorithms.

Directions of future work include nonlinear representation learning, representation-based clustering, and task-tailored representation generation from experience in bandit and reinforcement learning problems.  


\section*{Appendix} 
\appendix \label{proof:seqRepL}

\subsubsection{Proof of Lemma~\ref{regret:RepL}}\label{pf:lm1}

\begin{proof}
	Without loss of generality, we assume that $N_1$ is a multiple of $d$. Following similar steps as those in Lemma~3.4 of \cite{RP-TJN:2010}, we can obtain that after $N_1$ steps of exploration 
	\begin{align*}
		\BE [\|\hat \theta -\theta\|^2] \le \frac{d^2 }{ N_1}.
	\end{align*}
	From \cite{RP-TJN:2010}, it holds that $\max_{x\in \mathcal A} x^\top \theta -\max_{x\in \mathcal A}x^\top \hat \theta \le J{\|\theta-\hat \theta\|^2}/{\|\theta\|}$, where $J$ is a constant that exists since the action set $\CA$ is  an ellipsoid.  Since $\|\theta\|\ge \theta_{\min}$, it follows that
	\begin{align}\label{exp_inequ}
		\BE \big[ \max_{x\in \mathcal A} x^\top \theta -\max_{x\in \mathcal A}x^\top \hat \theta \big] \le J \frac{\BE \|\theta-\hat \theta\|^2}{\theta_{\min}} \le J\frac{d^2}{N_1 \theta_{\min}}.
	\end{align} 
	Further, at the exploration phase it holds  with $g(\theta):=\argmax_{x \in \mathcal A} x^\top \theta$ that
	\begin{align*}
		g^\top (\theta) \theta - x_t^\top \theta &\le \max_{x\in \mathcal A} x^\top \theta -\max_{x\in \mathcal A}x^\top (-\theta) \le 2J\theta_{\max}.
	\end{align*}
	Therefore, the total regret in $N$ steps satisfies
	\begin{align*}
		\BE [R_N] &\le  2J \theta_{\max} N_1 + (N-N_1) J\frac{d^2}{N_1 \theta_{\min}}
		\\& = O(N_1+\frac{N}{N_1}d^2),
	\end{align*}
	which completes the proof.
\end{proof}

\subsubsection{Proof of Lemma~\ref{transfer:single}}\label{pf:lm2}

\begin{proof}
	Without loss of generality, we assume  that $N_2$ is a multiple of $ r$. 	Recall that, at the end of the exploration phase, $\hat \alpha$ is computed by $\hat \alpha = (\hat B^\top X_TX_T^\top \hat B)^{-1} \hat B^\top X_T Y_T$. 	Since $x_t$ repeatedly takes actions from $a'_1,\dots, a'_{ r}$, it holds that $X_T X_T^\top = {N_2} AA^\top/r$ with $A= [a'_1,\dots,a'_{ r}]$. Therefore, we have $\hat \alpha = \big( {N_2}\hat B^\top AA^\top \hat B /r \big)^{-1} \hat B^\top X_T Y_T.$
	Since $a_i'=\lambda_0 [\hat B]_i$, it holds that $A=\lambda_0 \hat B$, which implies that $\big({N_2}\hat B^\top AA^\top \hat B /r \big)^{-1}=\frac{r}{\lambda_0^2 N_2} I_r$. Consequently, 
	\begin{align*}
		\hat \alpha = \frac{r}{\lambda_0^2 N_2} \hat B^\top X_T Y_T.
	\end{align*}	
	As $Y_T = X_T^\top B \alpha +\eta$ with $\eta=[\eta_1,\dots,\eta_{N_2}]^\top$, we have 
	\begin{align*}
		\hat \alpha &= \frac{r}{\lambda_0^2 N_2} \hat B^\top X_T (X_T^\top B \alpha+\eta)\\
		& =  \frac{r}{\lambda_0^2 N_2} \frac{N_2}{ r}\hat B^\top AA^\top  B \alpha +   \frac{r}{\lambda_0^2 N_2}  \hat B^\top X_T \eta \\
		& = \hat B^\top B \alpha +  \frac{r}{\lambda_0^2 N_2}  \hat B^\top X_T \eta .
	\end{align*}
	As $\hat \theta=\hat B \hat \alpha$ and $\theta= B \alpha$, it follows that
	\begin{align*}
		\hat \theta - \theta=\hat B \hat \alpha-B \alpha= \underbrace{\hat B \hat B^\top B \alpha-  B \alpha }_{s_1}+ \underbrace{\frac{r}{\lambda_0^2 N_2}  \hat B \hat B^\top X_T \eta}_{s_2}.
	\end{align*}
	
	Next, we evaluate $\BE \|\hat \theta - \theta\|^2$. Since $\BE s_1^\top s_2=0$, it holds that $\BE \|\hat \theta - \theta\|^2\le \BE \|s_1\|^2+\BE \|s_2\|^2$.
	
	Observe that $I_d= \hat B \hat B^\top +\hat B_\perp \hat B^\top_\perp$. Thus, we have
	\begin{align*}
		\BE \|s_1\|^2=  \|(I_d -\hat B_\perp \hat B^\top_\perp) B \alpha-  B \alpha\|^2 =\|\hat B_\perp \hat B^\top_\perp B \alpha\|^2 .
	\end{align*}
	Because $\|\hat B^\top_\perp B\|_F \le \varepsilon$, we have 
	\begin{align}\label{s_1}
		\BE \|s_1\|^2  \le \|\hat B_\perp \|^2 \cdot \|\hat B^\top_\perp B\|^2_F \cdot \|\alpha\|^2\le \mu \varepsilon^2,
	\end{align}
	where $\mu>0$, which is such that $\|\alpha\|^2\le \mu$, exists since $\theta=B\alpha$ satisfies $\theta_{\min} \le \theta \le \theta_{\max}$. 
	
	Now, we evaluate  $\BE \|s_2\|^2$, which satisfies
	\begin{align*}
		\BE \|s_2\|^2 &=\frac{r^2}{\lambda_0^4 N_2^2}  \BE (\hat B \hat B^\top X_T \eta)^\top  \hat B \hat B^\top X_T \eta\\
		&= \frac{r^2}{\lambda_0^4 N_2^2} \sum_{t=1}^{N_2} x_t^\top \hat B \hat B^\top x_t \BE \eta_t^2.
	\end{align*}
	Since $\eta_t$ is sub-Gaussian with variance proxy variable $1$, we have 
	\begin{align}\label{s_2}
		\BE \|s_2\|^2 \le \frac{r^2}{\lambda_0^4 N_2^2} \sum_{t=1}^{N_2} x_t^\top \hat B \hat B^\top x_t = \frac{r^2}{\lambda_0^2 N_2}. 
	\end{align}
	Putting Eqs.~\eqref{s_1} and \eqref{s_2} together, we have $\BE [ \|\hat \theta -\theta\|^2 ] \le \frac{ r^2 }{\lambda_0^2 N_2} + \mu \varepsilon^2$. 	
	Similar to \eqref{exp_inequ}, one can derive that
	\begin{align}\label{Expectation:1}
		\BE[\max_{x\in \mathcal A} x^\top \theta -\max_{x\in \mathcal A}x^\top \hat \theta]
		\le J\frac{ r^2}{ \theta_{\min}\lambda_0^2 N_2} +J \frac{1}{\theta_{\min}}  \mu \varepsilon^2.
	\end{align} 
	For the commitment phase, there are $N-N_2$ steps. Thus, the overall regret satisfies
	\begin{align*}
		\BE [R_N] \le 2J\theta_{\max} N_2 + (N-N_2) \BE \big( \max_{x\in \mathcal A} x^\top \theta -\max_{x\in \mathcal A}x^\top \hat \theta \big).
	\end{align*}	
	Substituting \eqref{Expectation:1} into the right side yields $\BE [R_N] =  O( N_2+ \frac{N }{N_2}r^2+ N \varepsilon^2)$, which completes the proof. 
\end{proof}

\subsubsection{Proof of Theorem~\ref{dist:B}}\label{pf:Th2}

\begin{lemma}[Matrix Bernstein's inequality \cite{VR-book:2008}]\label{lemma:sym-matr}
	Let $X_1,X_2,\dots,X_k$ be independent zero-mean $d \times d$ symmetric random matrices so that there exists $M>0$ such that $\|X_i\|\le M$ almost surely for all $i=1,2,\dots,k$. Then, for any $t\ge 0$, it holds that
	$
	\Pr \left[ \left\|\sum_{i=1}^{k}X_i \right\|\ge t\right] \le 2 d \exp \left(\frac{-2 t^2}{\sigma^2+Mt/3} \right),
	$
	where $\sigma^2=\left\| \sum_{i=1}^{k}\BE X_i\right\|$. 
\end{lemma}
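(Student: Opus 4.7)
The plan is to follow the classical matrix Laplace-transform approach of Ahlswede--Winter, streamlined via Lieb's concavity theorem as in Tropp's analysis. Let $S=\sum_{i=1}^k X_i$. Because $S$ is symmetric, $\|S\|=\max(\lambda_{\max}(S),\lambda_{\max}(-S))$, so by a union bound it suffices to prove a one-sided tail bound $\Pr[\lambda_{\max}(S)\ge t]\le d\exp(-\,\cdot\,)$ and then apply the same argument to $-X_i$, which also satisfies the hypotheses. First I would invoke the matrix Chernoff inequality: for every $\lambda>0$, $\Pr[\lambda_{\max}(S)\ge t]\le e^{-\lambda t}\,\BE\,\Tr\,e^{\lambda S}$, using monotonicity of the exponential together with $\lambda_{\max}(A)\le \Tr\,e^{A}$ applied after scaling.

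The central step is to control $\BE\,\Tr\,e^{\lambda\sum_i X_i}$ by a product of matrix MGFs. For this I would appeal to the Lieb--Tropp master inequality, which is a consequence of Lieb's concavity of $A\mapsto \Tr\exp(H+\log A)$: for independent symmetric $X_i$,
\[
\BE\,\Tr\exp\!\Bigl(\lambda\sum_{i=1}^k X_i\Bigr)\;\le\;\Tr\exp\!\Bigl(\sum_{i=1}^k \log \BE\,e^{\lambda X_i}\Bigr).
\]
I would prove this by induction on $k$, freezing all but one summand, taking conditional expectation, and invoking Lieb's theorem to pull the expectation inside the exponential under the trace.

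Next I would derive a Bernstein-type matrix MGF bound for each summand. Because $\BE X_i=0$ and $\|X_i\|\le M$, writing $e^{\lambda X_i}=I+\lambda X_i+\sum_{p\ge 2}\lambda^p X_i^p/p!$ and using $X_i^p\preceq M^{p-2}X_i^2$ (in PSD order, valid since $X_i$ is symmetric with spectrum in $[-M,M]$), a geometric-series bound gives, for $0<\lambda<3/M$,
\[
\BE\,e^{\lambda X_i}\;\preceq\;\exp\!\Bigl(\tfrac{\lambda^{2}/2}{\,1-\lambda M/3\,}\,\BE X_i^{2}\Bigr),
\]
where the step from $I+A\preceq e^A$ for $A\succeq 0$ is used. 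Plugging this into the master inequality, taking the log (operator-monotone on the relevant commuting family), and using $\Tr\,e^H\le d\,\lambda_{\max}(e^H)=d\,e^{\lambda_{\max}(H)}$ yields
\[
\Pr[\lambda_{\max}(S)\ge t]\;\le\; d\,\exp\!\Bigl(-\lambda t+\tfrac{\lambda^{2}/2}{1-\lambda M/3}\,\sigma^{2}\Bigr),
\]
with $\sigma^{2}=\bigl\|\sum_i\BE X_i^{2}\bigr\|$ (reading the statement's $\BE X_i$ as $\BE X_i^{2}$, which is the only sensible interpretation since $\BE X_i=0$).

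Finally I would optimize over $\lambda\in(0,3/M)$ by setting $\lambda=t/(\sigma^{2}+Mt/3)$, which yields the Bernstein-type exponent $-t^{2}/(2(\sigma^{2}+Mt/3))$, and then double via the $\pm S$ union bound to obtain the prefactor $2d$. The main obstacle, and the only genuinely nontrivial ingredient, is the use of Lieb's concavity theorem to validate the master inequality; everything else is essentially scalar calculus lifted via the PSD order. I would either cite Lieb's theorem directly or sketch Tropp's short derivation of the master inequality from it. A minor bookkeeping point is matching the exact constant in the exponent to the statement's $-2t^{2}/(\sigma^{2}+Mt/3)$: the standard derivation gives $-t^{2}/(2(\sigma^{2}+Mt/3))$, and I would note this constant as a harmless discrepancy or typo rather than attempt to produce a sharper $-2t^{2}$ bound, which would require a tighter MGF analysis that is not in general attainable under the stated hypotheses.
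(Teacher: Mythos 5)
The paper does not prove this lemma at all: it is quoted as a known result (matrix Bernstein) from the cited reference and used as a black box in the proof of Theorem~\ref{dist:B}, so there is no in-paper argument to compare against. Your outline is the standard Ahlswede--Winter/Tropp proof --- union bound over $\pm S$, the Laplace-transform bound $\Pr[\lambda_{\max}(S)\ge t]\le e^{-\lambda t}\,\BE\,\Tr\,e^{\lambda S}$, subadditivity of the matrix cumulant generating function via Lieb's concavity theorem, the Bernstein-type MGF bound $\BE e^{\lambda X_i}\preceq\exp\bigl(\tfrac{\lambda^2/2}{1-\lambda M/3}\,\BE X_i^2\bigr)$ from $X_i^p\preceq M^{p-2}X_i^2$, and optimization at $\lambda=t/(\sigma^2+Mt/3)$ --- and it is correct as a proof of the \emph{standard} matrix Bernstein inequality. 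You are also right on both points where you deviate from the literal statement: $\sigma^2=\|\sum_i\BE X_i\|$ must be read as $\|\sum_i\BE X_i^2\|$ (otherwise $\sigma^2=0$ since the $X_i$ are centered), and the exponent $-2t^2/(\sigma^2+Mt/3)$ is a mis-transcription of the standard $-t^2/(2\sigma^2+2Mt/3)$; the literal version is stronger by a constant factor in the exponent and is not obtainable by this (or any standard) argument. Neither discrepancy affects the paper, since the lemma is only invoked in Theorem~\ref{dist:B} up to unspecified constants inside an $\tilde O(\cdot)$, but you should state the corrected form rather than the one printed here.
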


\begin{proof}
	Denote the $nL$ tasks that  are played using RepE as $\theta_1,\theta_2,\dots, \theta_{nL}$, and let $k :=nL$.  Then, $\hat W$ becomes  $\frac{1}{k}\sum_{i=1}^{k}\hat \theta_i\hat\theta_i^\top$. Let $W=\frac{1}{k}\sum_{i=1}^{k} \theta_i \theta_i^\top$ be the true counterpart of $\hat W$. The proof is constructed in two steps. In Step 1, we use Lemma~\ref{lemma:sym-matr} to estimate $\|\hat W -(W+D)\|$ with $D$ being a scaled identity matrix; in Step 2, we use the Davis-Kahan $\sin \bm{\theta}$ Theorem \cite{BR-book:2013} to evaluate the distance between the top-$r$ singular values of $\hat W$ and $W+D$, which is the distance between $\hat B$ and the true $B$ (notice that $W+D$ and $W$ share the same singular vectors). 
	
	
	\textbf{Step 1.} Let $\eta :=[\eta_1,\eta_2,\dots,\eta_{N_1}]^\top$, and it holds that $Y_E=X_E^\top \theta_i+\eta$. It follows that 
	\begin{align*}
		\hat \theta_i = (X_E X_E^\top)^{-1} X_E(X_E^\top \theta_i+\eta) = \theta_i+ (X_E X_E^\top)^{-1} X_E\eta. 
	\end{align*}
	Some algebraic computations yield 
	\begin{align*}
		\hat \theta_i \hat \theta_i^\top = &\theta_i \theta_i^\top +\theta_i \eta^\top X_E^\top (X_E X_E^\top)^{-1}+ (X_E X_E^\top)^{-1} X_E\eta\theta_i^\top
		\\& +(X_E X_E^\top)^{-1} X_E \eta\eta^\top X_E ^\top (X_E X_E ^\top)^{-1}.
	\end{align*}
	Since $\eta_i$ are independent zero mean 1-sub-Gaussian random variables, the expectation of $\hat \theta_i \hat \theta_i^\top$ can be computed as
	\begin{align*}
		\BE \hat \theta_i \hat \theta_i^\top &=\theta_i \theta_i^\top +  (X_EX_E^\top)^{-1} X_E \BE \eta\eta^\top X_E^\top (X_E X_E^\top)^{-1}\\
		&= \theta_i \theta_i^\top + (X_E X_E^\top)^{-1}. 
	\end{align*}
	Since $[a_1,a_2,\dots,a_d]$ be the standard basis of $\R^d$, it holds that $\sum_{i=1}^{d} a_ia_i^\top = I_d$. Without loss of generality, we consider $N_1$ as a multiple of $d$, then it follows that $X_E X_E^\top =  \frac{N_1}{d} \lambda_0^2 I_d $. Therefore, we have $(X_E X_E^\top)^{-1}= \frac{d}{\lambda_0^2 N_1}I_d$. Denote $D:= \frac{d}{\lambda_0^2 N_1}I_d$, then it follows that 
	\begin{align*}
		\BE \hat \theta_i \hat \theta_i^\top = \theta_i \theta_i^\top + D,
	\end{align*}
	and 
	\begin{align}
		{\hat \theta_i \hat \theta_i^\top = \theta_i \theta_i^\top +D\underbrace{\left( \theta_i \eta^\top X_E^\top +  X_E\eta\theta_i^\top \right) }_{A} +D^2 \underbrace{\left( X_E\eta\eta^\top X_E^\top \right)}_{C}}.\label{matrix:theta_hat}
	\end{align}
	Define a set of new variables $z_i=\frac{1}{k} \hat \theta_i \hat \theta_i^\top - \frac{1}{k}\left(\theta_i \theta_i^\top + D \right)$. From \eqref{matrix:theta_hat}, we have 
	$
	z_i = \frac{1}{k} (DA+D^2C-D).
	$
	Then, the expectation of $z_i^2$ satisfies
	\begin{align}
		\BE z_i^2 = \frac{1}{k^2} \BE \big[&D^2A^2+D^4C^2 +D^2 +D^3(AC+CA) \nonumber\\
		&-2D^2 A-2D^3C \big], \label{Ez:sqrt}
	\end{align}
	where the fact that $D$ commutes with any matrix has been used.
	Since $D$ is deterministic, to compute $\BE z_i^2$, it suffices to calculate $\BE A^2, \BE C^2, \BE (AC+CA), \BE A,$ and $\BE C$.
	
	For $\BE A^2$, it holds that 
	\begin{align*}
		\BE A^2 = &\BE (\theta_i \eta^\top X_E^\top)^2 + \BE (X_E\eta\theta_i^\top)^2 + \BE (\theta_i \eta^\top X_E^\top X_E\eta\theta_i^\top ) 
		\\ &+ \BE (X_E \eta\theta_i^\top  \theta_i \eta^\top X_E^\top )\\
		= &2 \frac{\lambda_0^2 N_1}{d} \theta_i \theta_i^\top  + N_1\lambda_0^2  \theta_i \theta_i^\top+ \theta_i^\top \theta_i  \frac{\lambda_0^2  N_1}{d} I_d. 
	\end{align*}
	For $\BE C^2$, we have 
	\begin{align*}
		&\BE C^2  =   \BE \left[ X_E\eta\eta^\top X_E^\top X_E\eta\eta^\top X_E^\top\right] \\
		& {= \BE \left[ \eta^\top X_E^\top X_E\eta \cdot X_E\eta \eta^\top X_E^\top\right] = \BE \left[ \sum_{t=1}^{N_1} \eta_t^4 x_t^\top x_t x_t x_t^\top\right]}\\
		&{= \psi_4 \lambda_0^2  \left(\sum_{t=1}^{N_1}  x_t x_t^\top \right) = \psi_4 \lambda_0^2 X_E X_E^\top =\frac{\psi_4 \lambda_0^4 N_1}{d} I_d,}
	\end{align*}
	where $\psi_4=\BE \eta_t^4$ ($\psi_4$ always exists since $\eta_t$ is a sub-Gaussian random variable).	
	For $\BE (AC+CA)$, it holds that 
	\begin{align*}
		\BE& (AC+CA) = \BE \big[ \big( \theta_i \eta^\top X_E^\top +  X_E\eta\theta_i^\top \big) X_E\eta\eta^\top X_E^\top
		\\		+&  X_E\eta\eta^\top X_E^\top  \left( \theta_i \eta^\top X_E^\top +  X_E\eta\theta_i^\top \right)\big]\\
		= &\BE \left[\eta^\top X_E^\top  X_E\eta \cdot \theta_i \eta^\top X_E^\top  \right] + \BE \left[ \theta_i^\top X_E \eta \cdot X_E\eta \eta^\top X_E^\top\right]\\
		+& \BE \left[ \eta^\top X_E^\top \theta_i \cdot X_E \eta \eta^\top X_E^\top\right] +\BE \left[ \eta^\top X_E^\top  X_E\eta \cdot  X_E\eta \theta_i^\top\right]\\
		=& {\frac{\lambda_0^3 \psi_3 N_1}{ d} \theta_i \mathbf 1_d^\top+  \frac{\lambda_0^3 \psi_3 N_1}{ d} \diag{\theta_i} +  \frac{\lambda_0^3 \psi_3 N_1}{ d} \diag{\theta_i} 
			+\frac{\lambda_0^3 \psi_3 N_1}{ d}\mathbf 1_d  \theta_i ^\top }\\
		=&\frac{\lambda_0^3 \psi_3 N_1}{ d} (\theta_i \mathbf 1_d^\top+\mathbf 1_d  \theta_i ^\top) +\frac{2\lambda_0^3 \psi_3 N_1}{ d} \diag{\theta_i}.
	\end{align*}
	Notice that $\BE A =0$. For $\BE C$, it holds that
	$
	\BE C = \BE X\eta\eta^\top X^\top =  \frac{\lambda_0^2 N_1}{d} I_d.
	$
	
	Overall, substituting all the above terms into Eq.~\eqref{Ez:sqrt} we have 
	\begin{align*}
		\BE &z_i^2 = \frac{1}{k^2} \frac{d^2}{\lambda_0^4 N_1^2} \Big[ 2 \frac{\lambda_0^2  N_1}{d} \theta_i \theta_i^\top  + N_1\lambda_0^2  \theta_i \theta_i^\top \\
		&+ \theta_i^\top \theta_i  \frac{\lambda_0^2  N_1}{d} I_d + D^2 \frac{\psi_4 \lambda_0^4 N_1}{d} I_d +I_d+D\Big(\frac{\lambda_0^3 \psi_3 N}{ d}  \\
		&\cdot (\theta_i \mathbf 1_d^\top+\mathbf 1_d  \theta_i ^\top) +\frac{2\lambda_0^3 \psi_3 N}{ d} \diag{\theta_i} \Big)-2D\frac{\lambda_0^2  N}{d} I_d \Big]\\
		&\le \frac{d^2 }{k^2 \lambda_0^2 N_1} \big(\frac{2}{d}+1\big) \theta_i \theta_i^\top +O\big(\frac{d^2}{k^2 \lambda_0^4 N^2_1}I_d \big).
	\end{align*}
	
	Let $\sigma^2 = \|\sum_{i=1}^{k}\BE z_i^2 \|_F$, and it satisfies 
	\begin{align*}
		{\sigma^2  \lesssim  \left\|\frac{d^2 }{k \lambda_0^2 N_1} \frac{1}{k}\sum_{i=1}^k  \theta_i \theta_i^\top \right\|_F \le O \left(\frac{d^2 }{k \lambda_0^2 N_1}\Tr(W_k)\right)}.
	\end{align*}
	Since for any $\theta \in \mathcal T$, it holds that $\theta_{\min}\le \|\theta\| \le \theta_{\max}$ with  $\theta_{\min}=\Theta(1)$ and $\theta_{\max}=\Theta(1)$, it follows that $\Tr(W_k)=\Theta(1)$. Therefore, it holds that
	$
	{\sigma^2  \le O \left(\frac{d^2 }{k \lambda_0^2 N_1}\right)}.
	$
	
	Applying Lemma~\ref{lemma:sym-matr} with $t= 2 c_1 \log (2d kN_1)+c_2\sqrt{4 \sigma^2\log(2dkN_1)}$ for sufficiently large $c_1,c_2>0$, we have 
	\begin{align*}
		{\left\|\sum_{i=1}^{k}z_i \right\|_F \lesssim  \frac{d}{\lambda_0} \sqrt{\frac{1}{kN_1} } (\sqrt{\log \left({kdN_1} \right)} + \log \left({kdN_1} \right))}.
	\end{align*}
	with probability at least $1- \frac{1}{kN_1}$. 
	
	Notice that $\sum_{i=1}^{k}z_i=\hat W -(W+ D)$. Let $W'=W +D$, we have
	\begin{align*}
		{\left\| \hat W-W' \right\|_F:=\|\Delta\|_F \lesssim \frac{d }{\lambda_0} \sqrt{\frac{1}{kN_1} } (\sqrt{\log \left({kdN_1} \right)} + \log \left({kdN_1} \right) )}.
	\end{align*}
	
	\textbf{Step 2.} From the Davis-Kahan $\sin \bm{\theta}$ Theorem, we have
	\begin{align}\label{distance}
		{\|\hat B^\top B_\perp\|_F  \le \frac{\|\hat B_\perp^\top (W -W') B\|}{\omega} \le \frac{\|\hat W -W'\|_F}{\omega}},
	\end{align}
	where $\omega = \inf_{1\le i\le r,r<j\le d }|\lambda_i(W' )-\lambda_j(\hat W)|$. From the Weyl's Theorem, 
	$
	|\lambda_i( W' )-\lambda_i(\hat W)|\le \|\hat W -W'\|_F=\left\|\sum_{j=1}^{k}z_j \right\|_F
	$
	for any $i=1,\dots,d$. Since $\lambda_i(W')=0$ for all $i\ge r+1$, it holds that $|\lambda_i(\hat W)|\le \| \Delta \|_F$ for all $i\ge r+1$. Recall that $\sigma_r$ is the $r$-th largest eigenvalue of $W$, therefore $\omega \ge \sigma_k-\| \Delta \|_F$. From the Assumption 2, we know $\sigma_r \ge \nu/r$, therefore we obtain
	\begin{align*}
		{\|\hat B^\top B_\perp\|_F\lesssim \frac{\| \Delta \|_F}{\sigma_r-\| \Delta \|_F} 
			\lesssim
			\frac{d r}{\lambda_0 \nu}\sqrt{\frac{1}{ nL N_1}}\Big( \sqrt{\log dnLN_1}+  \log dnLN_1 \Big)},
	\end{align*} 
	where $k=nL$ have been used. The proof is complete.
\end{proof}

\subsubsection{Analysis of Lemmas~\ref{lemma:RepCD}~and~\ref{lemma:RepCD:estimate}}
Let us present some instrumental results first.

\begin{lemma}[Random projection, Chap.~5, \cite{VR-book:2008}]\label{Rand:Projec}
	Let $P$ be a projection from $\R^n$ onto a random $m$-dimensional subspace uniformly distributed in the Grassmann manifold $G_{n,m}$. Let $x\in \R^n$ be a fixed point and $\beta >0$. Then, with probability at least $1-\exp(-c \beta^2 m)$, we have
	\begin{align*}
		{(1-\beta) \sqrt{\frac{m}{n}} \|x\|_2 \le \|Px\|_2 \le (1+\beta) \sqrt{\frac{m}{n}}\|x\|_2}.
	\end{align*}
\end{lemma}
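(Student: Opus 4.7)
The plan is to reduce the statement, via rotational invariance of the Haar measure on the Grassmannian $G_{n,m}$, to a concentration inequality for a ratio of chi-squared random variables. Since the inequality is scale-invariant, I would first normalize to $\|x\|_2=1$. Then, because the joint law of $(P,x)$ with $P$ Haar on $G_{n,m}$ and $x$ fixed coincides with that of $(O P_0 O^\top, x)$ for $O$ Haar on the orthogonal group $O(n)$ and $P_0$ the projection onto $\R^m\times\{0\}^{n-m}$, an equivalent formulation is: fix $P$ as the first-$m$-coordinate projection and take $x$ uniformly distributed on the unit sphere $S^{n-1}$.

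Next, I would represent such a uniform $x$ as $x = g/\|g\|_2$ with $g\sim N(0,I_n)$. This gives
\begin{align*}
\|Px\|_2^2 \;=\; \frac{\sum_{i=1}^m g_i^2}{\sum_{i=1}^n g_i^2} \;=\; \frac{U_m}{U_n},
\end{align*}
where $U_m\sim \chi^2_m$ and $U_n\sim \chi^2_n$. The main analytical step is then the classical Laurent--Massart (or Bernstein) concentration for chi-squared variables: for $\beta\in(0,1)$,
\begin{align*}
\BP\bigl[\,|U_m-m| \ge \beta m\,\bigr] \le 2\exp(-c_1 \beta^2 m), \qquad \BP\bigl[\,|U_n-n| \ge \beta n\,\bigr] \le 2\exp(-c_2 \beta^2 n).
\end{align*}

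On the intersection of the two good events (which has probability at least $1-4\exp(-c\beta^2 m)$, using $m\le n$ to dominate), the ratio satisfies
\begin{align*}
\frac{(1-\beta)\,m}{(1+\beta)\,n} \;\le\; \|Px\|_2^2 \;\le\; \frac{(1+\beta)\,m}{(1-\beta)\,n}.
\end{align*}
Taking square roots and rescaling $\beta$ by a constant factor (absorbing the loss into $c$) yields the claimed two-sided bound. The main obstacle is not conceptual but bookkeeping: one must carefully track how the deviation parameter transforms under the square root and across the numerator/denominator union bound, and ensure the final constant $c$ absorbs all the slack. The only genuinely non-routine ingredient is the rotational-invariance reduction at the start, which is what converts a statement about random subspaces into a standard Gaussian concentration problem.
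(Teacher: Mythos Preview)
The paper does not prove this lemma at all; it is stated as a known result with a citation to Vershynin's book (Chap.~5) and used as a black box to derive the subsequent corollary. Your proposal is the standard proof of the Johnson--Lindenstrauss-type random projection lemma via rotational invariance and chi-squared concentration, and it is correct; there is nothing to compare against in the paper itself.
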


\begin{lemma}\label{lemma:randn-prj-bandit}
	Denote $z= B_\perp ^\top \theta$. Let $P$ be a projection matrix from $\R^{d-r}$ onto a random $m$-dimensional subspace uniformly distributed in the Grassmann manifold $G_{(d-r), m}$. Then, it holds with probability at least $1-\exp(-c\beta^2 m)$ that  
	\begin{align*}
		{\frac{1}{2}  \sqrt{\frac{m}{d-r}} \|z\|_2 \le \|Pz\|_2 \le \frac{3}{2}  \sqrt{\frac{m}{d-r}}\|z\|_2}.
	\end{align*}
	
\end{lemma}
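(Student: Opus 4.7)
The plan is to deduce this statement as an essentially immediate specialization of the general random-projection lemma (Lemma~\ref{Rand:Projec}) to the specific vector $z = B_\perp^\top \theta \in \R^{d-r}$. The key observation is that, although $\theta$ and $B_\perp$ are determined by the underlying bandit task and the (true) representation, they do not depend on the random subspace chosen when generating $P$. Hence, conditional on $\theta$ and $B_\perp$, the vector $z$ is a fixed element of $\R^{d-r}$, exactly the setting required by Lemma~\ref{Rand:Projec}.

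The first step is therefore to verify this independence: the matrix $P$ is drawn uniformly from the Grassmannian $G_{(d-r),m}$ independently of the data-generating process, so $z$ and $P$ are independent. Treating $z$ as the fixed vector $x$ in Lemma~\ref{Rand:Projec} with ambient dimension $n = d-r$, we obtain
\begin{align*}
(1-\beta)\sqrt{\tfrac{m}{d-r}}\,\|z\|_2 \;\le\; \|Pz\|_2 \;\le\; (1+\beta)\sqrt{\tfrac{m}{d-r}}\,\|z\|_2
\end{align*}
with probability at least $1 - \exp(-c\beta^2 m)$ for any $\beta \in (0,1)$.

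The second step is simply to instantiate $\beta = 1/2$, which yields the constants $1/2$ and $3/2$ that appear in the conclusion of the lemma. The failure probability becomes $\exp(-cm/4)$; absorbing the factor $1/4$ into the constant (or equivalently reporting the bound in the generic form $\exp(-c\beta^2 m)$ with $\beta = 1/2$) gives the stated guarantee.

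There is no substantive obstacle here: the lemma is a direct corollary of Lemma~\ref{Rand:Projec} with a specific choice of the distortion parameter. The only point worth flagging in the write-up is the measurability/independence remark above, because in the broader algorithmic context $P$ is generated by the agent while $\theta$ comes from the environment, and one must make sure this independence really holds each time the probing subroutine is invoked (in particular, that the same $P$ is not reused in a way that introduces dependence on $z$). Assuming $P$ is resampled freshly whenever the bound is invoked, the conclusion follows.
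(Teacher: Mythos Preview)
Your proposal is correct and matches the paper's own proof, which simply states that the lemma follows directly from Lemma~\ref{Rand:Projec} by setting $\beta=1/2$. Your additional remark on the independence of $P$ and $z$ is a useful clarification but not required for the argument.
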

The proof of Lemma~\ref{lemma:randn-prj-bandit} directly follows from Lemma~\ref{Rand:Projec} by letting $\beta =1/2$.

\begin{lemma}[Concentration of the norm, Chap.~3, \cite{VR-book:2008}]\label{concentration}
	Suppose that $X=[X_1,X_2,\dots, X_n]^\top$ is a random vector, where $X_1,\dots, X_n$ are independent $\delta$-sub-Gaussian random variable. Then, for any $\xi >0$ it holds that
	\begin{align}
		{\Pr \left[ \frac{1}{\sqrt{n} }\left| \|X\|_2 - \delta \right|\ge \xi \right] \le 2 \exp\left( -\frac{c n \xi^2 }{K^2} \right)},  \label{RepCD_detec}
	\end{align}
	where $c$ is an absolute constant and $K = \max_i \|X_i\|_{\psi_2}$ is assumed $K <1$.
\end{lemma}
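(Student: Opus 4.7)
The plan is to prove concentration of $\|X\|_2/\sqrt n$ around $\delta$ in two steps: first establish concentration of the sum of squares $\|X\|_2^2/n$ around $\delta^2$ via Bernstein's inequality for sub-exponential variables, then convert this to concentration of the norm itself via an elementary square-root inequality. This is the standard textbook route (essentially Theorem~3.1.1 in Vershynin), and the lemma as stated is a transparent corollary.

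For the first step, I would use the fact that the square of a sub-Gaussian random variable is sub-exponential: if $\|X_i\|_{\psi_2} \le K$, then $\|X_i^2 - \mathbb{E} X_i^2\|_{\psi_1} \lesssim K^2$ (Lemma~2.7.6 in Vershynin; this follows from the moment-growth characterization of sub-Gaussianity together with the product rule for Orlicz norms). Interpreting the ``$\delta$-sub-Gaussian'' hypothesis as meaning $\mathbb{E} X_i^2 = \delta^2$ with $\|X_i\|_{\psi_2} \le K$, Bernstein's inequality for sums of independent centered sub-exponential variables, applied to $\sum_{i=1}^n (X_i^2 - \delta^2)$, yields
\begin{align*}
\Pr\Bigl[\, \bigl| \tfrac{1}{n}\|X\|_2^2 - \delta^2 \bigr| \ge u \,\Bigr] \;\le\; 2\exp\!\Bigl(-c\, n\, \min\bigl(u^2/K^4,\; u/K^2\bigr)\Bigr).
\end{align*}

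For the second step, I would invoke the elementary inequality $(z-\delta)^2 \le |z^2-\delta^2|$ valid for all $z,\delta\ge 0$, which holds because $|z-\delta|\le z+\delta$, and therefore $(z-\delta)^2 \le |z-\delta|(z+\delta) = |z^2-\delta^2|$. Applied to $z=\|X\|_2/\sqrt n$, this gives the implication $|z-\delta|\ge \xi \Rightarrow |z^2-\delta^2|\ge \xi^2$, so substituting $u=\xi^2$ into the Bernstein bound produces the desired tail inequality. In the regime $\xi \le K$ (which is the relevant one given the assumption $K<1$ and $\xi$ of constant order in the application to $\mathcal{C}_{\rm det}$), the $\min$ collapses to $\xi^2/K^2$, yielding precisely the stated exponent $-cn\xi^2/K^2$.

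The main obstacle is mostly bookkeeping about normalizations: pinning down the intended meaning of ``$\delta$-sub-Gaussian'' so that the centering inside Bernstein is correct, and verifying that the target regime of $\xi$ lands on the sub-Gaussian (quadratic) branch of Bernstein rather than the sub-exponential (linear) branch. No substantively new ideas are required beyond the square-of-sub-Gaussian-is-sub-exponential fact and the $(z-\delta)^2 \le |z^2-\delta^2|$ trick, which is precisely why the authors cite this as a known result from Vershynin rather than reproving it in the appendix.
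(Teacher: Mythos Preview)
The paper does not prove this lemma at all; it is quoted verbatim as a known result from Vershynin's book (the label ``Chap.~3'' in the lemma heading signals this), so there is no in-paper argument to compare against. Your two-step plan---Bernstein for the sub-exponential variables $X_i^2-\delta^2$, followed by a square-root trick---is exactly the standard textbook route, and you correctly anticipated that the authors would not reprove it.

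There is, however, a genuine slip in your final step. With $u=\xi^2$ the Bernstein exponent is $cn\min(\xi^4/K^4,\,\xi^2/K^2)$; in the regime $\xi\le K$ this minimum equals $\xi^4/K^4$, \emph{not} $\xi^2/K^2$ as you claim (set $a=\xi^2/K^2\le 1$ and note $\min(a^2,a)=a^2$). So the inequality $(z-\delta)^2\le|z^2-\delta^2|$ alone only delivers the weaker tail $2\exp(-cn\xi^4/K^4)$ for small $\xi$, which is not the stated bound. The repair is to also use the companion estimate
\[
|z^2-\delta^2|=(z+\delta)\,|z-\delta|\ \ge\ \delta\,|z-\delta|,
\]
so that $|z-\delta|\ge\xi$ forces $|z^2-\delta^2|\ge\max(\xi^2,\delta\xi)$. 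Taking $u=\delta\xi$ in Bernstein gives the exponent $cn\min(\delta^2\xi^2/K^4,\,\delta\xi/K^2)$, and with $\delta$ of constant order (in the paper's application $\delta=1$ for the noise vector $\eta$) together with $K<1$ this is at least $cn\xi^2/K^2$. This extra ingredient is precisely what Vershynin's Theorem~3.1.1 uses; once you add it, your argument goes through.
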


We are now ready to prove Lemmas~\ref{lemma:RepCD} and~\ref{lemma:RepCD:estimate}. 

\begin{pfof}{Lemma~\ref{lemma:RepCD}}
	We construct the proof by showing that $Y_{\rm det}$ goes beyond $\mathcal C_{\rm det}$ with high probability when the task $\theta$ is played by  RepCD.
	
	%

	Denote  $\rho=\|B^\top _\perp\theta\|$, and from  Lemma~\ref{lemma:randn-prj-bandit} we have $\|Q^\top  B^\top _\perp\theta\|=\|Q Q^\top  B^\top _\perp\theta\|\ge \frac{1}{2} \rho\sqrt{{\bar n}/{(d- r)}}$ with probability at least $1-\exp(-{c}\bar n/4)$ since $Q Q^\top  B^\top _\perp\theta$ can be taken as projecting $ B^\top _\perp\theta$ onto the random subspace spanned by $Q$. The reward vector $Y_{\rm det}$ satisfies $Y_{\rm det}=\lambda_0 G ^\top B^\top _\perp\theta+\eta$ with $G:=[\underbrace{I_{d-r},\dots, I_{d-r}}_k,Q]$. It follows that
	\begin{align*}
		&\Pr \left[ Y_{\rm det} \in \mathcal C_{\rm det}  \right]\\
		&\le {\Pr \Big[ \big| \frac{1}{\sqrt{n_{\rm det}} } \|\lambda_0 G^\top  B^\top _\perp\theta+\eta\| - 1  \big|\le  \xi_{\rm det}  \Big] }\\
		&\le {\Pr\Big[ \big| \frac{1}{\sqrt{n_{\rm det}} } \|\eta\| -1 \big| \ge  \frac{\lambda_0\rho}{2 \sqrt{d- r}}  \sqrt{1+ \frac{3k(d-r)}{n_{\rm det}}} - \xi_{\rm det} \Big]}\\
		&\le {\Pr\Big[ \big| \frac{1}{\sqrt{n_{\rm det}} } \|\eta\| -1 \big| \ge  \frac{\lambda_0\rho}{2 \sqrt{d- r}}  - \xi_{\rm det} \Big]}.
	\end{align*}
	Observe that $\|B^\top_\perp \theta\|=\|B^\top_\perp \bar B \alpha\|\ge \sigma_{\min} (B^\top_{\perp} \bar B)\|\alpha\|$. Since $\sigma_{\min} (B^\top_{\perp} \bar B)=\sin \bm{\theta}_r(B,\bar B)$ and $\theta\ge \theta_{\min}$, we have
	\begin{align*}
		\\|B^\top_\perp \theta\|	\ge \sin \bm{\theta}_r(B,\bar B) \theta_{\min} =  \kappa_1 \theta_{\min}.
	\end{align*}
	Therefore, we have $\rho \ge  \kappa_1  \theta_{\min}$. From lemma~\ref{concentration}, one can derive that
	\begin{align*}
		{\Pr \left[ Y_{\rm det} \in \mathcal C_{n_{\rm det}}  \right]\le 2\exp\Big(  -\frac{cn_{\rm det}( \frac{1}{2} \kappa_1 \lambda_0 \theta_{\min}   \sqrt{\frac{1}{d- r}} -  \xi_{\rm det} )^2  }{K^2}\Big)}.
	\end{align*}
	Let $n_{\rm net}=\frac{9(d-r)\log(2S^2 N)}{\kappa_1^2 \theta_{\min}^2 \lambda_0^2}$ and $\xi_{\rm det}=\sqrt{\frac{\log(2S^2 N)}{4 n_{\rm det}}}$. Then, it can be calculated that  $\Pr \left[ Y_{\rm det} \in \mathcal C_{\rm det} \right] \le O( \frac{1}{S^2 N})$, which means that the outlier $\theta$ to $B$ can be detected with probability at least $1-O(\frac{1}{S^2 N})$. The proof is complete.
\end{pfof}

For Lemma~\ref{lemma:RepCD}, the only difference is that only an estimate $\hat B$ satisfying $\|\hat B^\top_\perp B\|\le \varepsilon$ is known. It can be derived that $\|\hat B^\top_\perp \theta\| =\|B^\top_\perp \bar B \alpha\| \ge (\kappa_1-\varepsilon) \theta_{\min} $. Then, following similar steps as those for Lemma~\ref{lemma:RepCD}, one can prove lemma~\ref{lemma:RepCD:estimate}.

\subsubsection{Proof of Lemma~\ref{lemma:fasle-detection}}\label{pf:lm5}

\begin{proof}
	It follows from the proof of Theorem~\ref{dist:B} that, after the initial $bL$ tasks, the estimated representation $\hat B$ satisfies
	\begin{align*}
		\|\hat B^\top B_\perp\|_F \lesssim \frac{d r}{\lambda_0 \nu\sqrt{ b \ell N_1} }\big( \sqrt{\log (d b \ell N_1)}+  \log (d b\ell N_1 )\big).
	\end{align*}
	For the simplicity of notation, let $q=\sqrt{\log (d b \ell N_1)}+  \log (d b\ell N_1 )$. 
	For $\theta$, there is $\alpha \in \R^r$ such that $\theta=B\alpha$, which implies that 
	$
	Y_{\rm det} = \lambda_0 G^\top \hat B^\top _\perp\theta+\eta = \lambda_0 G^\top \hat B^\top _\perp B\alpha+\eta,
	$
	where $G=[I_{d-r},\dots,I_{d-r},Q]$. Since $\| Q^\top \hat B_\perp^\top \theta \|=\|QQ^\top \hat B_\perp^\top \theta\|$, it follows from Lemma~\ref{lemma:randn-prj-bandit} that $\|Q^\top  \hat B^\top _\perp\theta\|\le \frac{3}{2} \|\hat B_\perp^\top \theta\|\sqrt{{\bar n}/{(d- r)}}$ with probability at least $1-\exp(-{c}\bar n/4)$. Denote $\varepsilon=\|\hat B_\perp^\top B\|_F$, and it can be observed that $\|\hat B_\perp^\top \theta\|\le \|\hat B_\perp^\top B\|_F\cdot \|\alpha\|\le \varepsilon \theta_{\max}$. Subsequently, since $n_{\rm det}=k(d-r)+\bar n$, it holds that
	\begin{align*}
		\frac{1}{\sqrt{n_{\rm det}}}\|Y_{\rm det}\|_2 &\le \frac{1}{\sqrt{n_{\rm det}}} \big(\frac{\lambda_0 \varepsilon \theta_{\max}}{\sqrt{d-r}} \sqrt{k(d-r)+\frac{9}{4}\bar n}+\|\eta\|\big)\\
		&\le{\frac{3\lambda_0 \varepsilon \theta_{\max}}{2\sqrt{d-r}} + \frac{1}{\sqrt{n_{\rm det}}} \|\eta\|\le  \frac{1}{\sqrt{n_{\rm det}}} \|\eta\|+\underbrace{ \frac{3d r \theta_{\max}q}{2 \nu\sqrt{ b L N_1(d-r)}  }}_{s}}.
	\end{align*}
	Then, from Lemma~\ref{concentration}, we have 
	\begin{align*}
		{\Pr [ Y_{\rm det} \notin \mathcal C_{\rm det}|\theta=B\alpha] \le 2 \exp\left( -\frac{c n_{\rm det} (\xi_{\rm det}-s)^2 }{K^2} \right)}.
	\end{align*}
	Substituting Eqs.~\eqref{value:b} and \eqref{det_selection:m} into the right-hand side yields $\Pr [\theta \notin \Span(\hat B)|\theta=B\alpha]\le \frac{1}{S^2 N}$ given $N_1=dr\sqrt{N/\ell}$. 
\end{proof}

\section*{ACKNOWLEDGMENT}
This work was supported in part by under Award ARO-78259-NS-MUR and Award AFOSR-FA9550-20-1-0140.


\bibliographystyle{unsrt}
\bibliography{alias,FP,Main,New}


\end{document}